\newcommand\ci{\perp\!\!\!\perp}
\newcommand\bias[1]{{\bf 1}_{ #1 }}
\newcommand{\G}{{\mathcal G}}
\newcommand{\quotient}[2]{\ensuremath{{}^{#1}\!/\!_{#2}}}
\newtheorem{cor}{Corollary}
\newtheorem{prop}{Proposition}
\newtheorem{lemma}{Lemma}
\theoremstyle{remark}
\theoremstyle{definition}
\DeclareMathOperator{\pa}{pa}
\DeclareMathOperator{\ch}{ch}
\DeclareMathOperator{\de}{de}
\DeclareMathOperator{\nd}{nd}
\DeclareMathOperator{\an}{an}
\DeclareMathOperator{\dis}{dis}
\DeclareMathOperator{\mb}{mb}
\newenvironment{lema}[1]{\par\noindent{\bf Lemma #1\ }\em}{\em}
\newenvironment{proa}[1]{\par\noindent{\textbf{Proposition #1} }\em}{\em}
\newcommand{\red}{\textcolor{red}}
\title{Identification In Missing Data Models Represented By \\ Directed Acyclic Graphs
}
\author{Rohit Bhattacharya$^{\dagger *}$, Razieh Nabi$^{\dagger *}$, Ilya Shpitser$^{\dagger}$, James M. Robins$^{\ddagger}$ \\ 
{$^{\dagger}$ Department of Computer Science, Johns Hopkins University, Baltimore, MD} \\ {$^{\ddagger}$ Department of Epidemiology, Harvard T. H. Chan School of Public Health, Boston, MA} \\ 
{$^{*}$ Equal contribution} \\
(\textit{rbhattacharya}@, \textit{rnabi}@, \textit{ilyas}@cs.)jhu.edu, \textit{robins}@hsph.harvard.edu }
\begin{document}
	
\maketitle

\begin{abstract}
Missing data is a pervasive problem in data analyses, resulting in datasets that contain censored realizations of a target distribution. Many approaches to inference on the target distribution using censored observed data, rely on missing data models represented as a factorization with respect to a directed acyclic graph. In this paper we consider the identifiability of the target distribution within this class of models, and show that the most general identification strategies proposed so far retain a significant gap in that they fail to identify a wide class of identifiable distributions. To address this gap, we propose a new algorithm that significantly generalizes the types of manipulations used in the ID algorithm \cite{shpitser06id, tian02general}, developed in the context of causal inference, in order to obtain identification. 

%
%
\end{abstract}

\section{INTRODUCTION}
Missing data is ubiquitous in applied data analyses resulting in target distributions that are systematically censored by a missingness process. A common modeling approach assumes data entries are censored in a way that does not depend on the underlying missing data, known as the missing completely at random (MCAR) model, or only depends on observed values in the data, known as the missing at random (MAR) model. These simple models are insufficient however, in problems where 
missingness status may depend on underlying values that are themselves censored.  This type of missingness is known as missing not at random (MNAR) \cite{robins97non, rubin76inference, tsiatis06missing}.

While the underlying target distribution is often not identified from observed data under MNAR, there exist identified MNAR models.  These include the permutation model \cite{robins97non}, the discrete choice model \cite{tchetgen16discrete}, the no self-censoring model \cite{sadinle16itemwise, shpitser2016consistent}, the block-sequential MAR model \cite{zhou2010block}, and others.  Restrictions defining many, but not all, of these models may be represented by a factorization of the full data law (consisting of both the target distribution and the missingness process) with respect to a directed acyclic graph (DAG).

The problem of identification of the target distribution from the observed distribution in missing data DAG models bears many similarities to the problem of identification of interventional distributions from the observed distribution in causal DAG models with hidden variables.  This observation prompted recent work \cite{mohan14missing, mohan2013missing, shpitser15missing} on adapting identification methods from causal inference to identifying target distributions in missing data models.

In this paper we show that the most general currently known methods for identification in missing data DAG models retain a significant gap, in the sense that they fail to identify the target distribution in many models where it is identified.  We show that methods used to obtain a complete characterization of identification of interventional distributions, via the ID algorithm \cite{shpitser06id, tian02general}, or their simple generalizations \cite{mohan14missing, mohan2013missing, shpitser15missing}, are insufficient on their own for obtaining a similar characterization for missing data problems.  We describe, via a set of examples, that in order to be complete, an identification algorithm for missing data must recursively simplify the problem by removing \emph{sets} of variables, rather than single variables, and these must be removed according to a \emph{partial order}, rather than a total order.  Furthermore, the algorithm must be able to handle subproblems where selection bias or hidden variables, or both, are present even if these complications are missing in the original problem.  We develop a new general algorithm that exploits these observations and significantly narrows the identifiability gap in existing methods. Finally, we show that in certain classes of missing data DAG models, our algorithm takes on a particularly simple formulation to identify the target distribution.



Our paper is organized as follows.  In section \ref{sec:intro}, we introduce the necessary preliminaries from the graphical causal inference literature.  In section \ref{sec:missing} we introduce missing data models represented by DAGs.  In section \ref{sec:examples}, we illustrate, via examples, that existing identification strategies based on simple generalizations of causal inference methods are not sufficient for identification in general, and describe generalizations needed for identification in these examples.  In section \ref{sec:alg}, we give a general identification algorithm which incorporates techniques needed to obtain identification in the examples we describe.  
Section \ref{sec:conclusion} contains our conclusions.  We defer longer proofs to the supplement in the interests of space.

\section{
PRELIMINARIES}
\label{sec:intro}

Many techniques useful for identification in missing data contexts were first derived in causal inference.  Causal inference is concerned with expressing counterfactual distributions, obtained after the intervention operation, from the observed data distribution, using constraints embedded in a causal model, often represented by a DAG.

A DAG is a graph $\cal{G}$ with a vertex set $\bf{V}$ connected by directed edges such that there are no directed cycles in the graph.
A statistical model of a DAG ${\cal G}$ is the set of distributions $p({\bf V})$ such that $p({\bf V}) = \prod_{V \in {\bf V}} p(V | \pa_{\cal G}(V))$,
where $\pa_{\cal G}(V)$ are the set of parents of $V$ in ${\cal G}$.
Causal models of a DAG are also sets of distributions, but on counterfactual random variables.  Given $Y \in {\bf V}$ and ${\bf A} \subseteq {\bf V} \setminus \{ Y \}$, a counterfactual variable, or potential outcome, written as $Y({\bf a})$, represents the value of $Y$ in a hypothetical situation where ${\bf A}$ were set to values ${\bf a}$ by an \emph{intervention operation} \cite{pearl09causality}.
Given a set ${\bf Y}$, define ${\bf Y}({\bf a}) \equiv \{ {\bf Y} \}({\bf a}) \equiv \{ Y({\bf a}) \mid Y \in {\bf Y} \}$.  The distribution $p({\bf Y}({\bf a}))$ is sometimes written as $p({\bf Y} | \text{do}({\bf a}))$ \cite{pearl09causality}.



A causal parameter is said to be \emph{identified} in a causal model if it is a function of the observed data distribution $p({\bf V})$.
Otherwise the parameter is said to be \emph{non-identified}.
In all causal models of a DAG ${\cal G}$ that are typically used, all interventional distributions 
$p(\{ {\bf V} \setminus {\bf A} \}({\bf a}))$
are identified by the \emph{g-formula} \cite{robins86new}:
{\small
\begin{align}
p(\{ {\bf V} \setminus {\bf A} \}({\bf a})) =
\!\!\!
\prod_{V \in {\bf V} \setminus {\bf A}}
\left.
\!\!\!
p(V | \pa_{\cal G}(V)) \right|_{{\bf A}={\bf a}}.
\label{eqn:g}
\end{align}
}%
If a causal model contains hidden variables, only data on the observed marginal distribution is available.  In this case, not every interventional distribution is identified, and identification theory becomes more complex.  A general algorithm for identification of causal effects in this setting was given in \cite{tian02general}, and proven complete in \cite{shpitser06id,huang06do}.  Here, we describe a simple reformulation of this algorithm as a truncated nested factorization analogous to the g-formula, phrased in terms of kernels and mixed graphs recursively defined via a fixing operator \cite{richardson17nested}.  As we will see, many of the techniques developed for identification in the presence of hidden variables will need to be employed (and generalized) for missing data, even if no variables are completely hidden.


We describe acyclic directed mixed graphs (ADMGs) obtained from a hidden variable DAG by a latent projection operation in section \ref{subsec:latent}, and a nested factorization associated with these ADMGs 
in section \ref{subsec:nested}.  This factorization is formulated in terms of conditional ADMGs and kernels (described in section \ref{subsubsec:cond}), via the fixing operator (described in section \ref{subsubsec:fix}).  The truncated nested factorization that yields all identifiable functions for interventional distributions is described in section \ref{subsec:id}.

As a prelude to the rest of the paper, we introduce the following notation for some standard genealogic sets of a graph $\mathcal{G}$ with a set of vertices ${\bf V}$: parents $\pa_{\mathcal{G}}(V) \equiv \{U \in {\bf V} | U \rightarrow V \}$, children $\ch_{\mathcal{G}}(V) \equiv \{U \in {\bf V} | V \rightarrow U \}$, 
descendants $\de_{\mathcal{G}}(V) \equiv \{U \in {\bf V} | V \rightarrow \cdots \rightarrow U \}$, ancestors $\an_{\mathcal{G}}(V) \equiv \{U \in {\bf V} | U \rightarrow \cdots \rightarrow V \}$, and non-descendants $\nd_{\mathcal{G}}(V) \equiv {\bf V}\setminus \de_{\mathcal{G}}(V)$. A district ${\bf D}$ is defined as the maximal set of vertices that are pairwise connected by a bidirected path (a path containing only $\leftrightarrow$ edges). We denote the district of $V$ as $\dis_{\mathcal{G}}(V)$, and the set of all districts in $\mathcal{G}$ as $\mathcal{D}(\mathcal{G})$. By convention, for any $V$, $\dis_{\mathcal{G}}(V) \cap \de(V) \cap \an_{\mathcal{G}}(V) = \{V\}$. Finally, the Markov blanket $\mb_{\cal G}(V)\equiv \dis_{\mathcal{G}}(V) \cup \pa_{\cal G}(\dis_{\mathcal{G}}(V))$ is defined as the set that gives rise to the following independence relation through m-separation: $V \ci \nd_{\cal G}(V) \setminus \mb_{\cal G}(V) | \mb_{\cal G}(V)$ \cite{richardson17nested}. The above definitions apply disjunctively to sets of variables ${\bf S} \subset {\bf V}$; e.g. $\pa_{\mathcal{G}}({\bf S}) = \cup_{S \in {\bf S}} \pa_{\mathcal{G}}(S)$.
%
%
\subsection{LATENT PROJECTION ADMGS}
\label{subsec:latent}

Given a DAG ${\cal G}({\bf V}\cup{\bf H})$, where ${\bf V}$ are observed and ${\bf H}$ are hidden variables, a {latent projection} ${\cal G}({\bf V})$ is the following ADMG with a vertex set ${\bf V}$. An edge $A \to B$ exists in ${\cal G}({\bf V})$ if there exists a directed path from $A$ to $B$ in ${\cal G}({\bf V}\cup{\bf H})$ with all intermediate vertices in ${\bf H}$.  Similarly, an edge $A \leftrightarrow B$ exists in ${\cal G}({\bf V})$ if there exists a path without consecutive edges $\to \circ \gets$ from $A$ to $B$ with the first edge on the path of the form $A \gets$ and the last edge on the path of the form $\to B$, and all intermediate vertices on the path in ${\bf H}$.  Latent projections define an infinite class of hidden variable DAGs that share identification theory.  Thus, identification algorithms are typically defined on latent projections for simplicity.

\subsection{NESTED FACTORIZATION}
\label{subsec:nested}

The nested factorization of $p({\bf V})$ with respect to an ADMG ${\G}({\bf V})$ is defined on \emph{kernel} objects derived from $p({\bf V})$ and \emph{conditional ADMGs} derived from $\G({\bf V})$.  The derivations are via a fixing operation, which can be causally interpreted as a single application of the g-formula on a single variable (to either a graph or a kernel) to obtain another graph or another kernel.


\subsubsection{Conditional Graphs And Kernels}
\label{subsubsec:cond}

A conditional acyclic directed mixed graph (CADMG) $\mathcal{G}(\mathbf{V}, \mathbf{W})$ is an ADMG in which the nodes are partitioned into $\mathbf{W}$, representing \textit{fixed variables}, and $\mathbf{V}$, representing \textit{random variables}. Only outgoing directed edges may be adjacent to variables in ${\bf W}$.

A \textit{kernel} $q_{\mathbf{V}}(\mathbf{V} | \mathbf{W})$ is a mapping from values in $\mathbf{W}$ to normalized densities over $\mathbf{V}$ \cite{lauritzen96graphical}.  In other words, kernels act like conditional distributions in the sense that $\sum_{\mathbf{v} \in \mathbf{V}} q_{\mathbf{V}}(\mathbf{v} | \mathbf{w}) = 1, \forall \mathbf{w} \in \mathbf{W}$. Conditioning and marginalization in kernels are defined in the usual way.  For $\mathbf{A} \subseteq \mathbf{V}$, we define $q(\mathbf{A} | \mathbf{W}) \equiv \sum_{\mathbf{V} \setminus \mathbf{A}} q(\mathbf{V} | \mathbf{W})$ and $q(\mathbf{V} \setminus \mathbf{A} | \mathbf{A}, \mathbf{W}) \equiv {q(\mathbf{V} | \mathbf{W})}/{q(\mathbf{A} | \mathbf{W})}$.

\subsubsection{Fixability And Fixing}
\label{subsubsec:fix}

A variable $V \in \mathbf{V}$ in a CADMG $\mathcal{G}$ is \textit{fixable} if $\de_{\G}(V) \cap \dis_{\G}(V) = \{ V \}$. In other words, $V$ is fixable if paths $V \leftrightarrow \dots \leftrightarrow U$ and $V \rightarrow \dots \rightarrow U$ do not \emph{both} exist in $\mathcal{G}$ for any $U \in \mathbf{V} \setminus \{ V \}$.
Given a CADMG $\G({\bf V},{\bf W})$ and $V \in {\bf V}$ fixable in $\G$, the fixing operator $\phi_V(\G)$ yields a new CADMG 
$\mathcal{G}'(\mathbf{V} \setminus \{V\}, \mathbf{W} \cup \{V\})$, where all edges with arrowheads into $V$ are removed, and all other edges in $\G$ are kept.  Similarly, given a CADMG $\G({\bf V},{\bf W})$, a kernel $q_{\bf V}({\bf V} | {\bf W})$, and $V \in {\bf V}$ fixable in $\G$, the fixing operator $\phi_V(q_{\bf V}; \G)$ yields a new kernel $q_{\mathbf{V} \setminus \{V\}}'(\mathbf{V} \setminus \{V\} | \mathbf{W} \cup \{V\}) \equiv \frac{q_{\mathbf{V}}(\mathbf{V} | \mathbf{W})}{q_{\mathbf{V}}(V | \nd_{\mathcal{G}}(V), \mathbf{W})}$. Fixing
is a probabilistic operation in which we divide a kernel by a conditional kernel. In some cases this operates as a conditioning operation, in other cases as a marginalization operation, and in yet other cases, as neither, depending on the structure of the kernel being divided.

For a set $\mathbf{S} \subseteq \mathbf{V}$ in a CADMG $\mathcal{G}$, if all vertices in ${\bf S}$ can be ordered into a sequence $\sigma_{\bf S} = \langle S_1, S_2, \dots \rangle$ such that $S_1$ is fixable in $\mathcal{G}$, $S_2$ in $\phi_{S_1}(\mathcal{G})$, etc., $\mathbf{S}$ is said to be \emph{fixable} in $\G$, ${\bf V} \setminus {\bf S}$ is said to be \emph{reachable} in $\G$, and $\sigma_{\bf S}$ is said to be valid.  A reachable set ${\bf C}$ is said to be \emph{intrinsic} if ${\cal G}_{\bf C}$ has a single district, where ${\cal G}_{\bf C}$ is the induced subgraph where we keep all vertices in ${\bf C}$ and edges whose endpoints are in ${\bf C}$.
We will define $\phi_{\sigma_{\bf S}}(\G)$ and $\phi_{\sigma_{\bf S}}(q_{\bf V}; \G)$ via the usual function composition to yield operators that fix all elements in ${\bf S}$ in the order given by  $\sigma_{\bf S}$. 

The distribution $p({\bf V})$ is said to obey the nested factorization for an ADMG $\G$ if there exists a set of kernels
$\left\{ q_{\bf C}\left({\bf C} \mid \pa_{\cal G}({\bf C})\right) \mid {\bf C} \text{ is intrinsic in }{\cal G} \right\}$ such that
for every fixable ${\bf S}$, and any valid $\sigma_{\bf S}$,
$\phi_{\sigma_{\bf S}}(p({\bf V});\G) = \prod_{{\bf D} \in {\cal D}(\phi_{\sigma_{\bf S}}(\G))} q_{\bf D}({\bf D} | \pa_{\G_{\bf S}}({\bf D}))$.
All valid fixing sequences for ${\bf S}$ yield the same CADMG $\G({\bf V} \setminus {\bf S}, {\bf S})$, and
if $p({\bf V})$ obeys the nested factorization for $\G$, all valid fixing sequences for ${\bf S}$ yield the same kernel.
As a result, for any valid sequence $\sigma$ for ${\bf S}$, we will redefine the operator $\phi_{\sigma}$, for both graphs and kernels, to be $\phi_{\bf S}$.  In addition, it can be shown that the above kernel set is characterized as:
$\left\{ q_{\bf C}\left({\bf C} \mid \pa_{\cal G}({\bf C})\right) \mid {\bf C} \text{ is intrinsic in }{\cal G} \right\} =
\left\{ \phi_{{\bf V} \setminus {\bf C}}(p\left({\bf V});{\cal G}\right) \mid {\bf C} \text{ is intrinsic in }{\cal G} \right\}$ \cite{richardson17nested}.
Thus, we can re-express the above nested factorization as stating that for any fixable set ${\bf S}$, we have
$\phi_{\bf S}(p({\bf V}); \G) = \prod_{{\bf D} \in {\cal D}(\phi_{{\bf S}}(\G))} \phi_{{\bf V} \setminus {\bf D}}(p({\bf V}); \G)$.

An important result in \cite{richardson17nested} states that if $p({\bf V} \cup {\bf H})$ obeys the factorization for a DAG $\G$ with vertex set ${\bf V} \cup {\bf H}$, then $p({\bf V})$ obeys the nested factorization for the latent projection ADMG $\G({\bf V})$.

\subsection{IDENTIFICATION AS A TRUNCATED NESTED FACTORIZATION}
\label{subsec:id}

For any disjoint subsets ${\bf Y},{\bf A}$ of ${\bf V}$ in a latent projection $\G({\bf V})$ representing a causal DAG $\G({\bf V}\cup{\bf H})$, define ${\bf Y}^* \equiv \an_{\G({\bf V})_{{\bf V} \setminus {\bf A}}}({\bf Y})$.
Then $p({\bf Y} ({\bf a}))$ is identified from $p({\bf V})$ in $\G$ \emph{if and only if} every set ${\bf D} \in {\cal D}(\G({\bf V})_{{\bf Y}^*})$ is intrinsic.  If identification holds, we have: 
{\small
\begin{align*}
p({\bf Y}({\bf a})) =
\sum_{{\bf Y}^* \setminus {\bf Y}} \prod_{{\bf D} \in {\cal D}(\G({\bf V})_{{\bf Y}^*})}
\!\!\!\!\!
\phi_{{\bf V} \setminus {\bf D}}(p({\bf V}); \G({\bf V})) \vert_{{\bf A} = {\bf a}}.
\end{align*}
}%
In other words, $p({\bf Y}({\bf a}))$ is identified if and only if it can be expressed as a factorization, where every piece corresponds to a kernel associated with a set intrinsic in $\G({\bf V})$.  Moreover, no term in this factorization contains elements of ${\bf A}$ as random variables, just as was the case in \eqref{eqn:g}. The above provides a concise formulation of the ID algorithm \cite{tian02general,shpitser06id} in terms of the nested Markov model which contains the causal model of the observed distribution.

If ${\bf Y} = \{ Y \}$, and ${\bf A} = \{ \pa_{\cal G}(Y) \}$, then the above truncated factorization has a simpler form:
{\small
\begin{align*}
p(Y({\bf a})) = \phi_{{\bf V} \setminus \{ Y \}}(p({\bf V}); {\cal G}) \vert_{{\bf A} = {\bf a}}.
\end{align*}
}%
In words, to identify the interventional distribution of $Y$ where all parents (direct causes) ${\bf A}$ of $Y$ are set to values ${\bf a}$, we must find a total ordering on variables other than $Y$ (${\bf V} \setminus \{ Y \}$) that forms a valid fixing sequence.  If such an ordering exists, the identifying functional is found from $p({\bf V})$ by applying the fixing operator to each variable in succession, in accordance with this ordering. Fig.~\ref{fig:totalorder} shows the identification of the functional $p(Y(a))$ following a total ordering of fixing $M, B, A$.

\begin{figure}[t]
	\begin{center}
		\scalebox{0.7}{
			\begin{tikzpicture}[>=stealth, node distance=1.2cm]
			\tikzstyle{format} = [thick, circle, minimum size=1.0mm, inner sep=0pt]
			\tikzstyle{square} = [draw, thick, minimum size=1.0mm, inner sep=3pt]
			\begin{scope}
			\path[->, very thick]
			node[] (b) {$B$}
			node[right of=b] (m) {$M$}
			node[right of=m] (a) {$A$}
			node[right of=a] (y) {$Y$}
			(b) edge[blue] (m)
			(m) edge[blue] (a)
			(a) edge[blue] (y)
			(b) edge[red, <->, bend left] (a)
			(b) edge[red, <->, bend right] (y)
			node [below of=m, xshift=-0.2cm, yshift=0.0cm] (l) {(a) $p({\bf V; \mathcal{G}})$}
			;
			\end{scope}
			\begin{scope}[xshift=6cm]
			\path[->, very thick]
			node[] (b) {$B$}
			node[square, right of=b] (m) {$m$}
			node[right of=m] (a) {$A$}
			node[right of=a] (y) {$Y$}
			(m) edge[blue] (a)
			(a) edge[blue] (y)
			(b) edge[red, <->, bend left] (a)
			(b) edge[red, <->, bend right] (y)
			node [below of=a, xshift=-1.1cm ,yshift=0.0cm] (l) {(b) $\phi_{\{M\}}(p({\bf V; \mathcal{G}}))$}
			node [below of=l, xshift=0.77cm, yshift=0.4cm] {$= \ p(Y,A|M,B)\ p(B)$}
			;
			\end{scope}
			\begin{scope}[yshift=-3cm]
			\path[->, very thick]
			node[square] (b) {$b$}
			node[square, right of=b] (m) {$m$}
			node[right of=m] (a) {$A$}
			node[right of=a] (y) {$Y$}
			(m) edge[blue] (a)
			(a) edge[blue] (y)
			node [below of=m, xshift=0.5cm, yshift=0.2cm] (l) {(c) $\phi_{\{M, B\}}(p({\bf V; \mathcal{G}})$}
			node [below of=l, xshift=1cm, yshift=0.4cm] {$= \ \sum_{B} \ p(Y,A|M,B) \ p(B)$}
			;
			\end{scope}
			\begin{scope}[xshift=6cm, yshift=-3cm]
			\path[->, very thick]
			node[square] (b) {$b$}
			node[square, right of=b] (m) {$m$}
			node[square, right of=m] (a) {$a$}
			node[right of=a] (y) {$Y$}
			(a) edge[blue] (y)
			node [below of=m, xshift=0.45cm, yshift=0.2cm] (l) {(d) $\phi_{\{M, B, A\}}(p({\bf V; \mathcal{G}}))$}
			node [below of =l, xshift=0.65cm, yshift=0.4cm] {$= \ \frac{\sum_{B} \ p(Y,A=a|M,B) \ p(B)}{\sum_{B}p(A=a|M,B) \ p(B)}$}
			;
			\end{scope}
			\end{tikzpicture}
		}
	\end{center}
	\caption{Identification of $p(Y(a))$ by following a total order of valid fixing operations.}
	\label{fig:totalorder}
\end{figure}
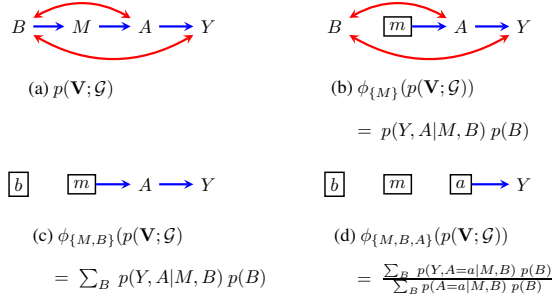

Before generalizing these tools to the identification of missing data models, we first introduce the representation of these models using DAGs.

\section{MISSING DATA MODELS OF A DAG}
\label{sec:missing}

Missing data models are sets of full data laws (distributions) $p({\bf X}^{({\bf 1})}, {\bf O}, {\bf R})$ composed of the target laws $p({\bf X}^{({\bf 1})}, {\bf O})$, and the nuisance laws $p({\bf R} | {\bf X}^{({\bf 1})}, {\bf O})$ defining the missingness processes.  The target law is over a set ${\bf X}^{({\bf 1})} \equiv \{ X_1^{(1)}, \ldots, X_k^{(1)} \}$ of random variables that are potentially missing, and a set ${\bf O} \equiv \{ O_1, \ldots, O_m \}$ of random variables that are always observed.  The nuisance law defines the behavior of missingness indicators ${\bf R} \equiv \{ R_1, \ldots, R_k \}$ given values of missing and observed variables.  Each missing variable $X^{(1)}_i \in {\bf X}^{({\bf 1})}$ has a corresponding observed proxy variable $X_i$, defined as $X_i \equiv X_i^{(1)}$ if $R_i = 1$, and defined as $X_i \equiv ``?"$ if $R_i = 0$ (this is the missing data analogue of the consistency property in causal inference).
As a result, the observed data law in missing data problems is $p({\bf R},{\bf O},{\bf X})$, while some function of the target law $p({\bf X}^{({\bf 1})},{\bf O})$, as its name implies, is the target of inference.  The goal in missing data problems is to estimate the latter from the former.
By chain rule of probability,
{\small
\begin{align}
p({\bf X}^{({\bf 1})},{\bf O}) = \frac{p({\bf X},{\bf O},{\bf R}={\bf 1})}{p({\bf R}={\bf 1}|{\bf X}^{(1)},{\bf O})}.
\label{eqn:chain}
\end{align}
}%
In other words, $p({\bf X}^{({\bf 1})},{\bf O})$ is identified from the observed data law $p({\bf R}, {\bf O}, {\bf X})$ if and only if $p({\bf R}={\bf 1}|{\bf X}^{({\bf 1})},{\bf O})$ is.  In general, $p({\bf X}^{({\bf 1})})$ is not identified from the observed data law, unless sufficient restrictions are placed on the full data law defining the missing data model. 


Many popular missing data models may be represented as a factorization of the full data law with respect to a DAG \cite{mohan2013missing}.  These include the permutation model, the monotone MAR model, the block sequential MAR model, and certain submodels of the no self-censoring model \cite{robins97non, shpitser2016consistent, zhou2010block}.

Given a set of full data laws $p({\bf X}^{({\bf 1})}, {\bf O}, {\bf R})$, a DAG ${\cal G}$ with the following properties may be used to represent a missing data model: ${\cal G}$ has a vertex set ${\bf X}^{({\bf 1})}, {\bf O}, {\bf R},{\bf X}$; for each $X_i \in {\bf X}$, $\pa_{\cal G}(X_i) = \{ R_i, X_i^{(1)} \}$; for each $R_i \in {\bf R}$, $\de_{\cal G}(R_i) \cap ({\bf X}^{(1)} \cup {\bf O}) = \emptyset$.
Given a DAG ${\cal G}$ with the above properties, a missing data model associated with ${\cal G}$ is the set of distributions
$p({\bf X}^{({\bf 1})}, {\bf O}, {\bf R})$ that can be written as
{\small
\begin{align}
\prod_{X_i \in {\bf X}} p(X_i | R_i, X_i^{(1)}) \prod_{V \in {\bf X}^{(1)}\cup{\bf O}\cup{\bf R}} p(V | \pa_{\cal G}(V)),
\label{eqn:miss-fact}
\end{align}
}%
where the set of factors of the form $p(X_i | R_i, X_i^{(1)})$ are deterministic to remain consistent with the definition of $X_i$.
Note that by standard results on DAG models, conditional independences in $p({\bf X}^{({\bf 1})}, {\bf O}, {\bf R})$ may be read off from ${\cal G}$ by the d-separation criterion \cite{pearl88probabilistic}.

\section{EXAMPLES OF IDENTIFIED MODELS}
\label{sec:examples}

In this section, we describe a set of examples of missing data models that factorize as in \eqref{eqn:miss-fact} for different DAGs, where the target law is identified.  We start with simpler examples where sequential fixing techniques from causal inference suffice to obtain identification, then move on to describe more complex examples where existing algorithms in the literature suffice, and finally proceed to examples where no published method known to us obtains identification, illustrating an identifiability gap in existing methods.  In these examples, we show how identification may be obtained by appropriately generalizing existing techniques.  In these discussions, we concentrate on obtaining identification of the nuisance law $p({\bf R} | {\bf X}^{({\bf 1})},{\bf O})$ evaluated at ${\bf R} = {\bf 1}$, as this suffices to identify the target law $p({\bf X}^{({\bf 1})}, {\bf O})$ by \eqref{eqn:chain}.  In the course of describing these examples, we will obtain intermediate graphs and kernels.  In these graphs, lower case letters (e.g. $v$) indicates the variable $V$ is evaluated at $v$ (for $R_i, r_i=1$). A square vertex indicates $V$ had been fixed. Drawing the vertex normally with lower case indicates $V$ was conditioned on (creating selection bias in the subproblem). For brevity, we use $\bias{R_i}$ to denote $\{R_i=1 \}$.

\begin{figure*}[ht]
	\begin{center}
		\scalebox{0.75}{
			\begin{tikzpicture}[>=stealth, node distance=1.5cm]
			\tikzstyle{format} = [thick, circle, minimum size=1.0mm, inner sep=0pt]
			\tikzstyle{square} = [draw, thick, minimum size=1.0mm, inner sep=3pt]
			\begin{scope}
			\path[->, very thick]
			node[] (x11) {$X^{(1)}_1$}
			node[right of=x11] (x21) {$X^{(1)}_2$}
			node[right of=x21] (x31) {$X^{(1)}_3$}
			node[below of=x11] (r1) {$R_1$}		
			node[below of=x21] (r2) {$R_2$}
			node[below of=x31] (r3) {$R_3$}
			node[below of=r1] (x1) {$X_1$}
			node[below of=r2] (x2) {$X_2$}
			node[below of=r3] (x3) {$X_3$}
			(x11) edge[blue] (x21)
			(x11) edge[blue] (r3)
			(x21) edge[blue] (x31)
			(x11) edge[blue, bend left] (x31)
			(x11) edge[blue] (r2)
			(x21) edge[blue] (r3)
			(r1) edge[blue] (r2)
			(r2) edge[blue] (r3)
			(r1) edge[blue, bend right] (r3)
			(r1) edge[gray] (x1)
			(r2) edge[gray] (x2)
			(r3) edge[gray] (x3)
			(x11) edge[gray, bend right] (x1)
			(x21) edge[gray, bend right] (x2)
			(x31) edge[gray, bend left] (x3)
			node [below of=x2, yshift=0.7cm] {(a) $\mathcal{G}$}
			;
			\end{scope}
			\begin{scope}[xshift=5.5cm]
			\path[->, very thick]
			node[] (x11) {$X_1$}
			node[right of=x11] (x21) {$X^{(1)}_2$}
			node[right of=x21] (x31) {$X^{(1)}_3$}
			node[square, below of=x11] (r1) {$r_1$}		
			node[below of=x21] (r2) {$R_2$}
			node[below of=x31] (r3) {$R_3$}
			node[below of=r2] (x2) {$X_2$}
			node[below of=r3] (x3) {$X_3$}
			(x11) edge[blue] (x21)
			(x11) edge[blue] (r3)
			(x21) edge[blue] (x31)
			(x11) edge[blue, bend left] (x31)
			(x11) edge[blue] (r2)
			(x21) edge[blue] (r3)
			(r1) edge[blue] (r2)
			(r2) edge[blue] (r3)
			(r1) edge[blue, bend right] (r3)
			(r2) edge[gray] (x2)
			(r3) edge[gray] (x3)
			(x21) edge[gray, bend right] (x2)
			(x31) edge[gray, bend left] (x3)
			node [below of=x2, yshift=0.7cm] {(b) $\G_1 \equiv \phi_{R_1}(\mathcal{G})$}
			;
			\end{scope}
			\begin{scope}[xshift=11cm]
			\path[->, very thick]
			node[] (x11) {$X_1$}
			node[right of=x11] (x21) {$X_2$}
			node[right of=x21] (x31) {$X^{(1)}_3$}
			node[square, below of=x11] (r1) {$r_1$}		
			node[square, below of=x21] (r2) {$r_2$}
			node[below of=x31] (r3) {$R_3$}
			node[below of=r2] (x2) {}
			node[below of=r3] (x3) {$X_3$}
			(x11) edge[blue] (x21)
			(x11) edge[blue] (r3)
			(x21) edge[blue] (x31)
			(x11) edge[blue, bend left] (x31)
			(x21) edge[blue] (r3)
			(r2) edge[blue] (r3)
			(r1) edge[blue, bend right] (r3)
			(r3) edge[gray] (x3)
			(x31) edge[gray, bend left] (x3)
			node [below of=x2, yshift=0.7cm] {(c) $\G_2 \equiv  \phi_{R_2}(\mathcal{G}_1)$}
			;
			\end{scope}
			\begin{scope}[xshift=16.5cm]
			\path[->, very thick]
			node[] (x11) {$X^{(1)}_1$}
			node[right of=x11] (x21) {$X^{(1)}_2$}
			node[right of=x21] (x31) {$X^{(1)}_3$}
			node[below of=x11] (r1) {$R_1$}		
			node[below of=x21] (r2) {$R_2$}
			node[below of=x31] (r3) {$R_3$}
			node[below of=r1] (x1) {$X_1$}
			node[below of=r2] (x2) {$X_2$}
			node[below of=r3] (x3) {$X_3$}
			(x11) edge[blue] (x21)
			(x11) edge[blue] (r2)
			(x21) edge[blue] (x31)
			(x11) edge[blue, bend left] (x31)
			(x31) edge[blue] (r2)
			(x21) edge[blue] (r3)
			(x21) edge[blue] (r1)
			(r1) edge[gray] (x1)
			(r2) edge[gray] (x2)
			(r3) edge[gray] (x3)
			(x31) edge[blue] (r1)
			(x11) edge[blue] (r3)
			(x11) edge[gray, bend right] (x1)
			(x21) edge[gray, bend right] (x2)
			(x31) edge[gray, bend left] (x3)
			node [below of=x2, yshift=0.7cm] {(d)}
			;
			\end{scope}
			\end{tikzpicture}
		}
	\end{center}
	\caption{(a), (b), (c) are intermediate graphs obtained in identification of a block-sequential model by fixing $\{R_1, R_2, R_3\}$ in sequence. (d) is an MNAR model that is identifiable by fixing all $R$s in parallel.}
	\label{fig:blockseq}
\end{figure*}
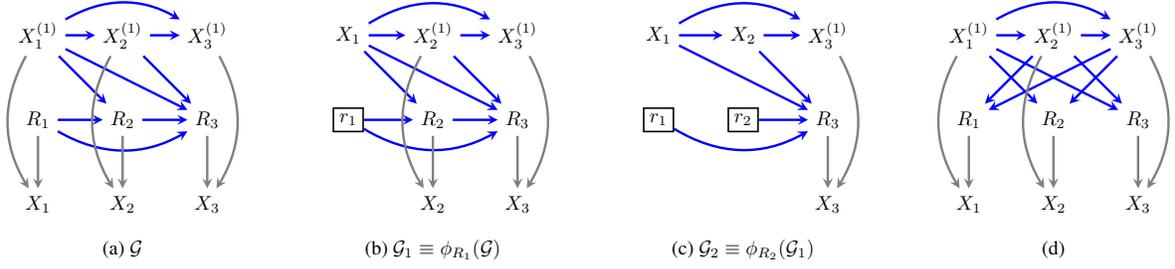

We first consider the block-sequential MAR model \cite{zhou2010block}, shown in Fig.~\ref{fig:blockseq} for three variables. The target law is identified by applying the (valid) fixing sequence $\langle R_1, R_2, R_3 \rangle$ via the operator $\phi$ to ${\cal G}$ and $p({\bf R},{\bf X})$.  We proceed as follows.  $p(R_1 | \pa_{\cal G}(R_1)) = p(R_1 | \nd_{\cal G}(R_1)) = p(R_1)$ is identified immediately.  Applying the fixing operator $\phi_{R_1}$ yields the graph ${\cal G}_1 \equiv \phi_{R_1}({\cal G})$ shown in Fig.~\ref{fig:blockseq}(b), and corresponding kernel
$q_1(X_1^{(1)}, X_2, X_3, R_2, R_3 | \bias{R_1}) \equiv p(X_1, X_2, X_3, R_2, R_3, \bias{R_1}) / p(\bias{R_1})$ where $X_1^{(1)}$ is now observed.
Thus, in the new subproblem represented by ${\cal G}_1$ and $q_1$, $p(R_2 | \pa_{\cal G}(R_2)) \vert_{{\bf R=1}} = q_1(R_2 | X_1^{(1)}, \bias{R_1})$ is identified.  Applying the fixing operator $\phi_{R_2}$ to ${\cal G}_1$ and $q_1$ yields ${\cal G}_2 \equiv \phi_{R_2}({\cal G}_1)$ shown in Fig.~\ref{fig:blockseq}(c), and $q_2(X_1^{(1)}, X_2^{(1)}, X_3, R_3 | \bias{R_1, R_2}) = q_1(X_1^{(1)}, X_2, X_3, R_2, R_3 | \bias{R_1}) / q_1(R_2 | X_1^{(1)}, \bias{R_1})$.  Finally, in the new subproblem represented by ${\cal G}_2$ and $q_2$, $p(R_3 | \pa_{\cal G}(R_3)) \vert_{{\bf R = 1}} = q_2(R_3 | X_1^{(1)},X_2^{(1)},\bias{R_1, R_2})$ is identified.  Applying the fixing operator $\phi_{R_3}$ to ${\cal G}_2$ and $q_2$ yields $q_3(X_1^{(1)}, X_2^{(1)}, X_3^{(1)} | \bias{R_1, R_2, R_3}) = p(X_1^{(1)}, X_2^{(1)}, X_3^{(1)})$.  The identifying functional for the target law only involves monotone cases (cases where $R_i = 0$ implies $R_{i+1} = 0$) just as would be the case under the monotone MAR model, although this model does not assume monotonicity and is not MAR.  In this simple example, identification may be achieved purely by causal inference methods, by treating variables in ${\bf R}$ as treatments, and finding a valid fixing sequence on them.  In this example, each $R_i$ in the sequence is fixable given that the previous variables are fixable, since all parents of each $R_i$ become observed at the time it is fixed.

Following a total order to fix is not always sufficient to identify the target law, as noted in \cite{mohan2013missing, mohan14missing, shpitser15missing}. Consider the model represented by DAG in Fig.~\ref{fig:blockseq}(d).  For any $R_i$ in this model, say $R_1$, we have, by d-separation, that $p(R_1 | \pa_{\cal G}(R_1)) = p(R_1 | X_2^{(1)}, X_3^{(1)}, \bias{R_2, R_3})$, which is identified.  However, if we were to fix $R_1$ in $p({\bf X},{\bf R})$, we would obtain a kernel
$q_1(X_1^{(1)}, X_2, X_3, \bias{R_2, R_3} | \bias{R_1})$ where selection bias on $R_2$ and $R_3$ is introduced.  The fact that $q_1$ is not available at all levels of $R_2$ and $R_3$ prevents us from sequentially obtaining $p(R_i | \pa_{\cal G}(R_i))$, for $R_i = R_2,R_3$, due to our inability to sum out those variables from $q_1$.

The model in Fig.~\ref{fig:blockseq}(d) allows identification of the target law in another way, however.  This follows from the fact that
$p(R_i | \pa_{\cal G}(R_i))$ is identified for each $R_i$ by exploiting conditional independences in $p({\bf X},{\bf R})$ displayed by Fig.~\ref{fig:blockseq}(d).  Since $p({\bf R} | {\bf X}^{({\bf 1})}) = \prod_{i=1}^3 p(R_i | \pa_{\cal G}(R_i))$, the nuisance law is identified, which means the target law is also identified, as long as we fix $R_1,R_2,R_3$ \emph{in parallel} (as in (\ref{eqn:chain})) rather than sequentially.  In other words, the model is identified, but no total order on fixing operations suffices for identification.  A general algorithm that aimed to fix indicators in ${\bf R}$ in parallel, while potentially exploiting causal inference fixing operations to identify each $p(R_i | \pa_{\cal G}(R_i))$
was proposed in \cite{shpitser15missing}. Our subsequent examples show that this algorithm is insufficient to obtain identification of the target law in general, and thus is incomplete.

Consider the DAG in Fig.~\ref{fig:incomplete}.  Since $R_2$ is a child of $R_3$ and $X_2^{(1)}$ is a parent of $R_3$, we cannot obtain $p(R_3 | \pa_{\cal G}(R_3)) = p(R_3 | X_2^{(1)})$ by d-separation in any kernel (including the original distribution) where $R_2$ is not fixed.  Thus, any total order on fixing operations of elements in ${\bf R}$ must start with $R_1$ or $R_2$.  Fixing either of these variables entails dividing $p({\bf X},{\bf R})$ by some factor $p(R_i | \pa_{\cal G}(R_i))$, which is identified as either $p(R_1 | X_3^{(1)}, \bias{R_3})$ or $p(R_2 | X_1^{(1)}, \bias{R_1})$.  This division entails inducing selection bias on the subsequent kernel $q_1$ for a variable not yet fixed (either $R_3$ or $R_1$).  Thus, no total order on fixing operations works to identify the target law in this model.  At the same time, attempting to fix all $R$ variables in parallel would fail as well, since we cannot identify $p(R_3 | X_2^{(1)})$ either in the original distribution or any kernel obtained by standard causal inference operations described in \cite{shpitser15missing}.  In particular, in any such kernel or distribution $R_3$ remains dependent on $R_2$ given $X_2^{(1)}$.

However, the target law in this model is identified by following a \emph{partial order} $\prec$ of fixing operations.  In this partial order, $R_1$ is incompatble with $R_2$, and $R_2 \prec R_3$. This results in an identification strategy where we fix each variable \emph{only} given that variables earlier than it in the \emph{partial} order are fixed.  That is, distributions $p(R_1 | X^{(1)}_3) = p(R_1 | X_3, \bias{R_3})$ and $p(R_2 | X^{(1)}_1, R_3) = p(R_2 | X_1, \bias{R_1}, R_3)$ are obtained directly in the original distribution without fixing anything.  The distribution $p(R_3 | \pa_{\cal G}(R_3))$, on the other hand, is obtained in the kernel $q_1(X_1, X_2^{(1)}, X_3, \bias{R_1}, R_3 | \bias{R_2}) = p({\bf X},{\bf R}) / p(R_2 | X_1, \bias{R_1}, R_3)$ after $R_2$ (the variable earlier than $R_3$ in the partial order) is fixed.  The graph corresponding to this kernel is shown in Fig.~\ref{fig:incomplete}(b).  Note that in this graph $X_2^{(1)}$ is observed, and there is selection bias on $R_1$.
However, it easily follows by d-separation that $R_3$ is independent of $R_1$.  It can thus be shown that $p(R_3 | X_2^{(1)}) =
q_1(R_3 | X_2^{(1)}, \bias{R_2})$ even if $q_1$ is only available at value $R_1 = 1$.  Since all $p(R_i | \pa_{\cal G}(R_i))$ are identified, so is the target law in this model, by (\ref{eqn:chain}).
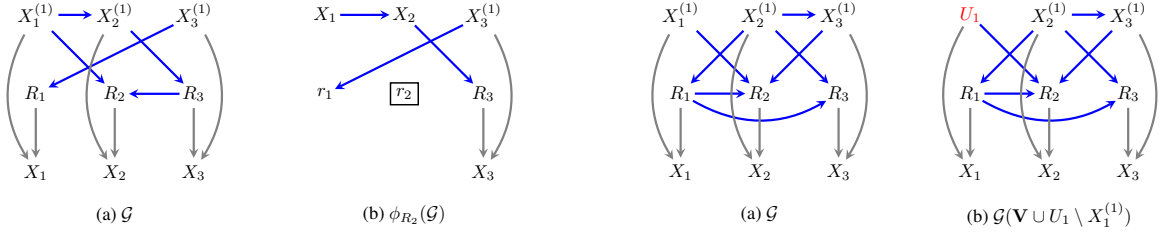
\begin{figure}[t]
\begin{center}
\scalebox{0.7}{
\begin{tikzpicture}[>=stealth, node distance=1.5cm]
    \tikzstyle{format} = [thick, circle, minimum size=1.0mm, inner sep=0pt]
	\tikzstyle{square} = [draw, thick, minimum size=1.0mm, inner sep=3pt]
	\begin{scope}
		\path[->, very thick]
			node[format] (x11) {$X^{(1)}_1$}
			node[format, right of=x11] (x21) {$X^{(1)}_2$}
			node[format, right of=x21] (x31) {$X^{(1)}_3$}
			node[format, below of=x11] (r1) {$R_1$}
			node[format, below of=x21] (r2) {$R_2$}
			node[format, below of=x31] (r3) {$R_3$}
			node[format, below of=r1] (x1) {$X_1$}
			node[format, below of=r2] (x2) {$X_2$}
			node[format, below of=r3] (x3) {$X_3$}
			(x11) edge[blue] (x21)
			(x11) edge[blue] (r2)
			(x21) edge[blue] (r3)
			(r3) edge[blue] (r2)
			(x31) edge[blue] (r1)
			(r1) edge[gray] (x1)
			(r2) edge[gray] (x2)
			(r3) edge[gray] (x3)
			(x11) edge[gray, bend right] (x1)
			(x21) edge[gray, bend right] (x2)
			(x31) edge[gray, bend left] (x3)
			node [below of=x2, yshift=0.7cm] {(a) $\mathcal{G}$}
			;
	\end{scope}
	\begin{scope}[xshift=5.5cm, yshift=0cm]
		\path[->, very thick]
			node[format] (x11) {$X_1$}
			node[format, right of=x11] (x21) {$X_2$}
			node[format, right of=x21] (x31) {$X^{(1)}_3$}
			node[format, below of=x11] (r1) {$r_1$}
			node[square, below of=x21] (r2) {$r_2$}
			node[format, below of=x31] (r3) {$R_3$}
			node[format, below of=r2] (x2) {}
			node[format, below of=r3] (x3) {$X_3$}
			(x11) edge[blue] (x21)
			(x21) edge[blue] (r3)
			(x31) edge[blue] (r1)
			(r3) edge[gray] (x3)
			(x31) edge[gray, bend left] (x3)
			node [below of=x2, yshift=0.7cm] {(b) $\phi_{R_2}(\mathcal{G})$}
			;
	\end{scope}
\end{tikzpicture}
}
\end{center}
\caption{(a) A DAG where $R$s are fixed according to a partial order. (b) The CADMG obtained by fixing $R_2$.}
\label{fig:incomplete}
\end{figure}

Next, we consider the model in Fig.~\ref{fig:fakeprop}. Here, $p(R_2 | X_1^{(1)}, X_3^{(1)}, R_1) = p(R_2 | X_1, X_3, \bias{R_1, R_3})$ and
$p(R_3 | X_2^{(1)}, R_1) = p(R_3 | X_2, \bias{R_2}, R_1)$ are identified immediately.  However, $p(R_1 | X_2^{(1)})$ poses a problem.
In order to identify this distribution, we either require that $R_1$ is conditionally independent of $R_2$, possibly after some fixing operations, or we are able to render $X^{(1)}_2$ observable by fixing $R_2$ in some way.  Neither seems to be possible in the problem as stated.  In particular, fixing $R_2$ via dividing by $p(R_2 | X_1^{(1)}, X_3^{(1)}, R_1)$ will necessarily induce selection bias on $R_1$, which will prevent identification of $p(R_1 | X_2^{(1)})$ in the resulting kernel.

However, we can circumvent the difficulty by treating $X_1^{(1)}$ as an \emph{unobserved variable} $U_1$, and attempting the problem in the resulting (hidden variable) DAG shown in Fig.~\ref{fig:fakeprop}(b), and its latent projection ADMG $\tilde{\cal G}$ shown in Fig.~\ref{fig:fakeprop}(c), where $U_1$ is ``projected out.''  In the resulting problem, we can fix variables according to a partial order $\prec$
where $R_2$ and $R_3$ are incompatible, $R_2 \prec R_1$, and $R_3 \prec R_1$.
Thus, we are able to fix $R_2$ and $R_3$ in parallel by dividing by $p(R_2 | \mb_{\tilde{\cal G}}(R_2)) = p(R_2 | X_1, R_1, X_3^{(1)}, \bias{R_3})$ and $p(R_3 | R_1, X_2^{(1)}) = p(R_3 | R_1, X_2, \bias{R_2})$, leading to a kernel
$\tilde{q}_1(X_1, X_2^{(1)}, X_3^{(1)}, R_1 | \bias{R_2, R_3})$, and the graph $\phi_{\prec R_1}(\tilde{\cal G})$ shown in Fig.~\ref{fig:fakeprop}(d), where notation $\phi_{\prec R_1}$ means ``fix all necessary elements that occur earlier than $R_1$ in the partial order, in a way consistent with that partial order.''  In this example, this means fixing $R_2$ and $R_3$ in parallel.  We will describe how fixing operates given general \emph{fixing schedules} given by a partial order later in the paper.  In the kernel $\tilde{q}_1$ the parent of $R_1$
is observed data, meaning that $p(R_1 | X_2^{(1)})$ is identified as $\tilde{q}_1(R_1 | X_2, \bias{R_2, R_3})$.  This implies the target law is identified in this model.

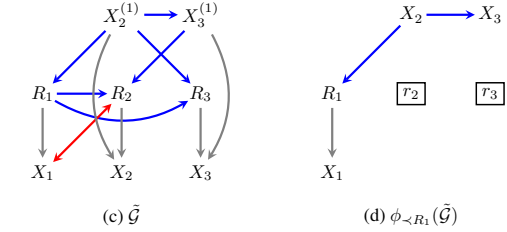
\begin{figure}[t]
\begin{center}
\scalebox{0.7}{
\begin{tikzpicture}[>=stealth, node distance=1.5cm]
    \tikzstyle{format} = [thick, circle, minimum size=1.0mm, inner sep=0pt]
	\tikzstyle{square} = [draw, thick, minimum size=1.0mm, inner sep=3pt]
	\begin{scope}
		\path[->, very thick]
			node[format] (x11) {$X^{(1)}_1$}
			node[format, right of=x11] (x21) {$X^{(1)}_2$}
			node[format, right of=x21] (x31) {$X^{(1)}_3$}
			node[format, below of=x11] (r1) {$R_1$}
			node[format, below of=x21] (r2) {$R_2$}
			node[format, below of=x31] (r3) {$R_3$}
			node[format, below of=r1] (x1) {$X_1$}
			node[format, below of=r2] (x2) {$X_2$}
			node[format, below of=r3] (x3) {$X_3$}
			(x11) edge[blue] (r2)
			(x21) edge[blue] (r1)
			(x21) edge[blue] (r3)
			(x21) edge[blue] (x31)
			(x31) edge[blue] (r2)
			(r1) edge[blue] (r2)
			(r1) edge[blue, bend right] (r3)
			(r1) edge[gray] (x1)
			(r2) edge[gray] (x2)
			(r3) edge[gray] (x3)
			(x11) edge[gray, bend right] (x1)
			(x21) edge[gray, bend right] (x2)
			(x31) edge[gray, bend left] (x3)
			node [below of=x2, yshift=0.7cm] {(a) $\mathcal{G}$}
			;
	\end{scope}
	\begin{scope}[xshift=5.5cm]
		\path[->, very thick]
			node[format] (x11) {$\red{U_1}$}
			node[format, right of=x11] (x21) {$X^{(1)}_2$}
			node[format, right of=x21] (x31) {$X^{(1)}_3$}
			node[format, below of=x11] (r1) {$R_1$}
			node[format, below of=x21] (r2) {$R_2$}
			node[format, below of=x31] (r3) {$R_3$}
			node[format, below of=r1] (x1) {$X_1$}
			node[format, below of=r2] (x2) {$X_2$}
			node[format, below of=r3] (x3) {$X_3$}
			(x11) edge[blue] (r2)
			(x21) edge[blue] (r1)
			(x21) edge[blue] (r3)
			(x21) edge[blue] (x31)
			(x31) edge[blue] (r2)
			(r1) edge[blue] (r2)
			(r1) edge[blue, bend right] (r3)
			(r1) edge[gray] (x1)
			(r2) edge[gray] (x2)
			(r3) edge[gray] (x3)
			(x11) edge[gray, bend right] (x1)
			(x21) edge[gray, bend right] (x2)
			(x31) edge[gray, bend left] (x3)
			node [below of=x2, yshift=0.7cm] {(b) $\mathcal{G}({\bf V} \cup U_1 \setminus X^{(1)}_1)$}
			;
	\end{scope}
	\begin{scope}[yshift=-5cm]
		\path[->, very thick]
			node[format] (x11) {}
			node[format, right of=x11] (x21) {$X^{(1)}_2$}
			node[format, right of=x21] (x31) {$X^{(1)}_3$}
			node[format, below of=x11] (r1) {$R_1$}
			node[format, below of=x21] (r2) {$R_2$}
			node[format, below of=x31] (r3) {$R_3$}
			node[format, below of=r1] (x1) {$X_1$}
			node[format, below of=r2] (x2) {$X_2$}
			node[format, below of=r3] (x3) {$X_3$}
			(x21) edge[blue] (r1)
			(x21) edge[blue] (r3)
			(x21) edge[blue] (x31)
			(x31) edge[blue] (r2)
			(r1) edge[blue] (r2)
			(r1) edge[blue, bend right] (r3)
			(r1) edge[gray] (x1)
			(r2) edge[gray] (x2)
			(r3) edge[gray] (x3)
			(x1) edge[red, <->] (r2)
			(x21) edge[gray, bend right] (x2)
			(x31) edge[gray, bend left] (x3)
			node [below of=x2, yshift=0.7cm] {(c) $\tilde{\mathcal{G}}$}
			;
	\end{scope}
	\begin{scope}[xshift=5.5cm, yshift=-5cm]
		\path[->, very thick]
			node[format] (x11) {}
			node[format, right of=x11] (x21) {$X^{}_2$}
			node[format, right of=x21] (x31) {$X^{}_3$}
			node[format, below of=x11] (r1) {$R_1$}
			node[square, below of=x21] (r2) {$r_2$}
			node[square, below of=x31] (r3) {$r_3$}
			node[format, below of=r1] (x1) {$X_1$}
			node[ below of=r2] (x2) {}
			(x21) edge[blue] (r1)
			(x21) edge[blue] (x31)
			(r1) edge[gray] (x1)
			node [below of=x2, yshift=0.7cm] {(d) $\phi_{\prec R_1}(\tilde{\mathcal{G}})$}
			;
	\end{scope}
\end{tikzpicture}
}
\end{center}
\caption{A DAG where selection bias on $R_1$ is avoidable by following a partial order fixing schedule on an ADMG induced by latent projecting out $X^{(1)}_1$.}
\label{fig:fakeprop}
\end{figure}

In general, to identify $p(R_i | \pa_{\cal G}(R_i))$, we may need to use separate partial fixing orders on different sets of variables for different $R_i \in {\bf R}$.  In addition, the fact that fixing introduces selection bias sometimes results in having to divide by a kernel where a \emph{set} of variables are random, something that was never necessary in causal inference problems.
In general, for a given $R_i$, the goal of a fixing schedule is to arrive at a kernel where an independence exists allowing us to identify $p(R_i | \pa_{\cal G}(R_i))$, even if some elements of $\pa_{\cal G}(R_i)$ are in ${\bf X}^{({\bf 1})}$ in the original problem.  This fixing must be given by a partial order, and sometimes on sets of variables.  In addition, some elements of ${\bf X}^{({\bf 1})}$ must be treated as hidden variables.  These complications are necessary in general to avoid creating selection bias in subproblems, and ultimately to identify the nuisance law.  The following example is a good illustration.


\begin{figure}[t]
	\begin{center}
		\scalebox{0.65}{
			\begin{tikzpicture}[>=stealth, node distance=1.5cm]
			\tikzstyle{format} = [thick, circle, minimum size=1.0mm, inner sep=0pt]
			\tikzstyle{square} = [draw, thick, minimum size=1.0mm, inner sep=3pt]
			\begin{scope}
			\path[->, very thick]
			node[format] (x11) {$X^{(1)}_1$}
			node[format, right of=x11] (x21) {$X^{(1)}_2$}
			node[format, right of=x21] (x31) {$X^{(1)}_3$}
			node[format, right of=x31] (x41) {$X^{(1)}_4$}
			node[format, below of=x11] (r1) {$R_1$}
			node[format, below of=x21] (r2) {$R_2$}
			node[format, below of=x31] (r3) {$R_3$}
			node[format, below of=x41] (r4) {$R_4$}
			node[format, below of=r1] (x1) {$X_1$}
			node[format, below of=r2] (x2) {$X_2$}
			node[format, below of=r3] (x3) {$X_3$}
			node[format, below of=r4] (x4) {$X_4$}
			(x11) edge[blue, bend left] (x31)
			(x11) edge[blue] (r2)
			(x31) edge[blue] (x21)
			(x21) edge[blue] (r1)
			(x11) edge[blue] (r4)
			(x41) edge[blue] (r1)
			(x41) edge[blue] (r3)
			(r2) edge[blue] (r1)
			(r2) edge[blue] (r3)
			(r3) edge[blue, bend left] (r1)
			(r4) edge[blue, bend left] (r2)
			(r1) edge[gray] (x1)
			(r2) edge[gray] (x2)
			(r3) edge[gray] (x3)
			(r4) edge[gray] (x4)
			(x11) edge[gray, bend right] (x1)
			(x21) edge[gray, bend right] (x2)
			(x31) edge[gray, bend left] (x3)
			(x41) edge[gray, bend left] (x4)
			node [below of=x2, xshift=0.75cm, yshift=0.7cm] {(a)}
			;
			\end{scope}
			\begin{scope}[xshift=6.5cm]
			\path[->, very thick]
			node[format] (x11) {$X^{(1)}_1$}
			node[format, right of=x11] (x21) {}
			node[format, right of=x21] (x31) {$X^{(1)}_3$}
			node[format, right of=x31] (x41) {}
			node[format, below of=x11] (r1) {$R_1$}
			node[format, below of=x21] (r2) {$R_2$}
			node[format, below of=x31] (r3) {$R_3$}
			node[format, below of=x41] (r4) {$R_4$}
			node[format, below of=r1] (x1) {$X_1$}
			node[format, below of=r2] (x2) {$X_2$}
			node[format, below of=r3] (x3) {$X_3$}
			node[format, below of=r4] (x4) {$X_4$}
			(x11) edge[blue, bend left=25] (x31)
			(x11) edge[blue] (r2)
			(x31) edge[blue, bend right=10] (r1)
			(x11) edge[blue] (r4)
			(r2) edge[blue] (r1)
			(r1) edge[red, <->, bend left] (r3)
			(r1) edge[red, <->, bend right=-5] (x4)
			(r3) edge[red, <->] (x4)
			(r1) edge[red, <->, bend right=10] (x2)
			(r2) edge[blue] (r3)
			(r3) edge[blue, bend left=30] (r1)
			(r4) edge[blue, bend left] (r2)
			(r1) edge[gray] (x1)
			(r2) edge[gray] (x2)
			(r3) edge[gray] (x3)
			(r4) edge[gray] (x4)
			(x31) edge[blue, bend left=9] (x2)
			(x11) edge[gray, bend right] (x1)
			(x31) edge[gray, bend left] (x3)
			
			node [below of=x2, xshift=0.75cm, yshift=0.7cm] {(b)}
			;
			\end{scope}
			\end{tikzpicture}
		}
	\end{center}
	\caption{(a) A DAG where the fixing operator must be performed on a set of vertices.
		(b) A latent projection of a subproblem used for identification of $p(R_4 | X_1^{(1)})$.
	}
	\label{fig:setFix}
\end{figure}
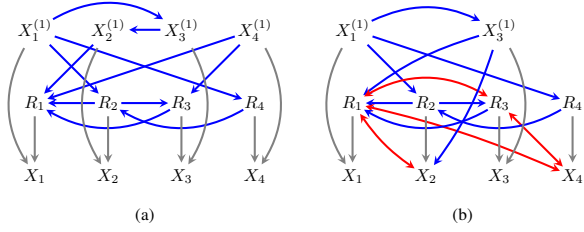

Consider the graph in Fig.~\ref{fig:setFix}(a). For $R_1$ and $R_3$, the fixing schedules are empty, and we immediately obtain their distributions as $p(R_1 | X^{(1)}_2, X^{(1)}_4, R_2, R_3) = p(R_1 | X_2, X_4, R_3, \bias{R_2, R_4})$ and $p(R_3 | X^{(1)}_4, R_2) = p(R_3 | X_4, \bias{R_4}, R_2)$.  For $R_2$, the partial order is $R_3 \prec R_1$ in a graph where we treat $X^{(1)}_2$ as a hidden variable $U_2$.
This yields $p(R_2 | X^{(1)}_1, R_4) = q_2(R_2 | X_1^{(1)}, R_4, \bias{R_1, R_3})$, where 
${\small q_2(X_1^{(1)}, X_2, X_3^{(1)}, X_4, R_2, \bias{R_4} | \bias{R_1, R_3})}$ is equal to
{\scriptsize
$
\frac{q_1(X_1, X_2, X_3^{(1)}, X_4, R_1, R_2, \bias{R_4} | \bias{R_3})}{q_1(\bias{R_1} | X_2, X_3, X_4, R_2, \bias{R_3, R_4})}, 
$
}
and 

{\scriptsize
\[
q_1( X_1, X_2, X_3^{(1)}, X_4, R_1, R_2, \bias{R_4} | \bias{R_3}) = 
\frac{p({\bf X},R_1,R_2,\bias{R_3, R_4})}{p(\bias{R_3} | R_2, X_4,\bias{R_4})}.
\]
}%
In order to obtain the propensity score for $R_4$ we must either render $X^{(1)}_1$ observable through fixing $R_1$ or perform valid fixing operations until we obtain a kernel in which $R_4$ is conditionally independent of $R_1$ given its parent $X_1^{(1)}$.
However, there exists no partial order on elements of ${\bf R}$. All partial orders on elements in $\bf R$ induce selection bias on variables higher in the order, preventing the identification of the required distribution for $R_4$.
For example, choosing a partial fixing order of $R_1 \prec R_3$, where we treat $X^{(1)}_2$ and $X^{(1)}_4$ as hidden variables results in selection bias on $R_3$ as soon as we fix $R_1$. Other partial orders fail similarly.
However, the following approach is possible in the graph in which we treat $X^{(1)}_2$ and $X^{(1)}_4$ as hidden variables.

$R_1$ and $R_3$ lie in the same district in the resulting latent projection ADMG, shown in Fig.~\ref{fig:setFix}(b).  Moreover, the set $\{ R_1, R_3 \}$ is closed under descendants in the district in Fig.~\ref{fig:setFix}(b).
As a result, $R_1$ and $R_3$ can essentially be viewed as a single vertex from the point of view of fixing.  Indeed we may  choose a partial order $\{R_1, R_3 \} \prec R_2$, where we fix $R_1$ and $R_3$ as a set.  
The fixing operation on the set is possible since $p( \bias{R_1, R_3} | \mb(R_1, R_3)) = p( \bias{R_1, R_3} | R_2, R_4, X_2, X^{(1)}_3, X_4)$ is a function of observed data law, $p({\bf X},{\bf R})$.  
Specifically, it is equal to 
{\small
$
p( \bias{R_3} | R_2, R_4, X_2, X_4) p( \bias{R_1} | R_2, R_4, X_2, X_3, X_4,  \bias{R_3}),
$
}
where the equality holds by d-separation ($R_3 \ci X^{(1)}_3 | R_2, R_4, X_2, X_4$). We then obtain 
${\small p(R_4 | X^{(1)}_1) = \frac{\sum_{X^{(1)}_3, X_4} q_2(X^{(1)}_1, X^{(1)}_3, X_4, R_4 | \bias{R_1, R_2, R_3})}
{\sum_{X^{(1)}_3, X_4, R_4} q_2(X^{(1)}_1, X^{(1)}_3, X_4, R_4 | \bias{R_1, R_2, R_3})},}$
where 
$	q_2(. | \bias{{\bf R}\setminus R_4}) = \tfrac{q_1(X_1^{(1)}, X_2, X_3^{(1)}, X_4, R_2, R_4 | \bias{R_1, R_3})}{q_1(R_2 | X^{(1)}_1, R_4, \bias{R_1, R_3})}, $ 
and $q_1(. | \bias{R_1, R_3}) = \frac{p({\bf X}, R_2, R_4, \bias{R_1, R_3})}{p(\bias{R_1, R_3} | R_2, R_4, X_2, X^{(1)}_3, X_4)}.$

Our final example demonstrates that in order to identify the target law, we may potentially need to fix variables outside ${\bf R}$, including variables in ${\bf X}^{({\bf 1})}$ that become observed after fixing or conditioning on some elements of ${\bf R}$.
Fig.~\ref{fig:UAI}(a) contains a generalization of the model considered in \cite{shpitser15missing}, where $O_3$ is fully observed.
In this model, distributions for $R_4$ and $R_1$ are identified immediately, while identification of $R_2$ requires a partial order $R_4 \prec X^{(1)}_4 \prec O_3 \prec R_1$ in the graph where we treat $X^{(1)}_1, X^{(1)}_2, X^{(1)}_4$ as latent variables (with the latent projection ADMG shown in Fig.~\ref{fig:UAI}(b)) until they are rendered observed by fixing the corresponding missingness indicators.  To illustrate fixing operations according to this order, the intermediate graphs that arise are shown in Fig.~\ref{fig:UAI}(c),(d),(e),(f).

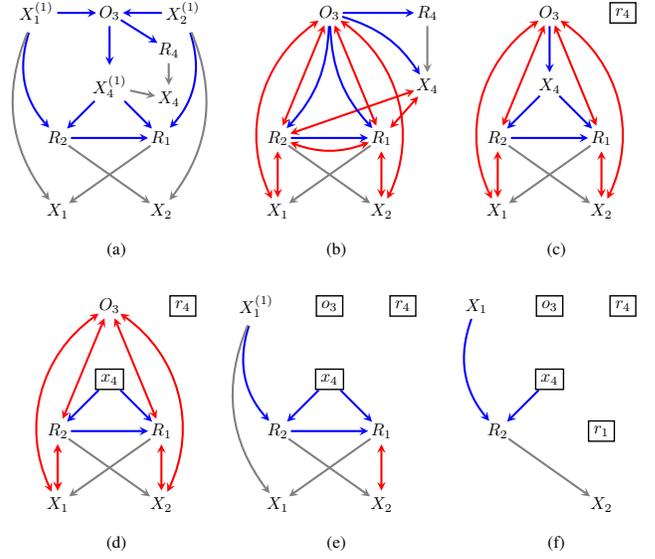
\begin{figure}[t]
\begin{center}
\scalebox{0.65}{
\begin{tikzpicture}[>=stealth, node distance=1.5cm]
    \tikzstyle{format} = [thick, circle, minimum size=1.0mm, inner sep=0pt]
	\tikzstyle{square} = [draw, thick, minimum size=1.0mm, inner sep=3pt]
	\begin{scope}
		\path[->, very thick]
			node[format] (x11) {$X^{(1)}_1$}
			node[format, right of=x11] (x31) {$O_3$}
			node[format, right of=x31] (x21) {$X^{(1)}_2$} 
			node[format, below of=x31] (x41) {$X^{(1)}_4$} 
			node[format, below right of=x41] (r1) {$R_1$} 
			node[format, below left of=x41] (r2) {$R_2$} 
			node[format, below right of=x31, xshift=0.15cm, yshift=0.3cm] (r4) {$R_4$} 
			node[format, below of=r4, yshift=0.5cm] (x4) {$X_4$}
			node[format, below of=r2] (x1) {$X_1$} 
			node[format, below of=r1] (x2) {$X_2$} 
			(x11) edge[blue] (x31)			
			(x21) edge[blue] (x31)	
			(x31) edge[blue] (x41)		
			(x41) edge[blue] (r2)			
			(x41) edge[blue] (r1)
			(r2)   edge[blue] (r1)		
			(x31) edge[blue] (r4)	
			(r1) edge[gray] (x1)
			(r2) edge[gray] (x2)	
			(r4) edge[gray] (x4)	
			(x41) edge[gray] (x4)	
			(x11) edge[blue, bend right=30] (r2)
			(x21) edge[blue, bend left=30] (r1)	
			(x11) edge[gray, bend right=30] (x1)	
			(x21) edge[gray, bend left=30] (x2)			
			node [below of=x1, xshift=1.2cm, yshift=0.7cm] {(a)}
			;
	\end{scope}
	\begin{scope}[xshift=4.5cm]
		\path[->, very thick]
			node[format] (x11) {}
			node[format, right of=x11] (x31) {$O_3$}
			node[format, below of=x31] (x41) {} 
			node[format, below right of=x41] (r1) {$R_1$} 
			node[format, below left of=x41] (r2) {$R_2$} 
			node[format, right of=x31, xshift=0.5cm] (r4) {$R_4$} 
			node[format, below of=r4] (x4) {$X_4$}
			node[format, below of=r2] (x1) {$X_1$} 
			node[format, below of=r1] (x2) {$X_2$} 
			(x31) edge[red, <->, bend left=40] (x2)
			(r1) edge[red, <->] (x2)
			(r2)   edge[blue] (r1)		
			(x31) edge[blue] (r4)	
			(r1) edge[gray] (x1)
			(r2) edge[gray] (x2)	
			(r4) edge[gray] (x4)	
			(x31) edge[red, <->, bend right=40] (x1)
			(x31) edge[red, <->] (r2)
			(x31) edge[red, <->] (r1)
			(r2) edge[red, <->] (x1)
			(r2) edge[red, bend right=20, <->] (r1)
			(r1) edge[red, <->] (x4)
			(r2) edge[red, <->] (x4)
			(x31) edge[blue, bend left=20] (r2)
			(x31) edge[blue, bend right=20] (r1)
			(x31) edge[blue, bend left=20] (x4)
			node [below of=x1, xshift=1.2cm, yshift=0.7cm] {(b)}	
			;
	\end{scope}
	\begin{scope}[xshift=9cm]
		\path[->, very thick]
			node[format] (x11) {}
			node[format, right of=x11] (x31) {$O_3$}
			node[format, below of=x31] (x41) {$X_4$} 
			node[format, below right of=x41] (r1) {$R_1$} 
			node[format, below left of=x41] (r2) {$R_2$} 
			node[square, right of=x31] (r4) {$r_4$} 
			node[format, below of=r4] (x4) {}
			node[format, below of=r2] (x1) {$X_1$} 
			node[format, below of=r1] (x2) {$X_2$} 
			(x31) edge[red, <->] (r1)
			(x31) edge[red, <->, bend left=40] (x2)
			(r1) edge[red, <->] (x2)
			(x31) edge[blue] (x41)		
			(x41) edge[blue] (r2)			
			(x41) edge[blue] (r1)
			(r2)   edge[blue] (r1)
			(r2) edge[red, <->] (x1)
			(r1) edge[gray] (x1)
			(r2) edge[gray] (x2)	
			(x31) edge[red, <->, bend right=40] (x1)
			(x31) edge[red, <->] (r2)
			node [below of=x1, xshift=1.2cm, yshift=0.7cm] {(c)}
			;
	\end{scope}
	\begin{scope}[yshift=-6cm]
		\path[->, very thick]
			node[format] (x11) {}
			node[format, right of=x11] (x31) {$O_3$}
			node[square, below of=x31] (x41) {$x_4$} 
			node[format, below right of=x41] (r1) {$R_1$} 
			node[format, below left of=x41] (r2) {$R_2$} 
			node[square, right of=x31] (r4) {$r_4$} 
			node[format, below of=r4] (x4) {}
			node[format, below of=r2] (x1) {$X_1$} 
			node[format, below of=r1] (x2) {$X_2$} 
			(x31) edge[red, <->] (r1)
			(x31) edge[red, <->, bend left=40] (x2)
			(r1) edge[red, <->] (x2)
			(x41) edge[blue] (r2)			
			(x41) edge[blue] (r1)
			(r2)   edge[blue] (r1)	
			(r2) edge[red, <->] (x1)	
			(r1) edge[gray] (x1)
			(r2) edge[gray] (x2)	
			(x31) edge[red, <->, bend right=40] (x1)
			(x31) edge[red, <->] (r2)
			(r2) edge[red, <->] (x1)
			node [below of=x1, xshift=1.2cm, yshift=0.7cm] {(d)}
			;
	\end{scope}
	\begin{scope}[xshift=4.5cm, yshift=-6cm]
		\path[->, very thick]
			node[format] (x11) {$X^{(1)}_1$}
			node[square, right of=x11] (x31) {$o_3$}
			node[square, below of=x31] (x41) {$x_4$} 
			node[format, below right of=x41] (r1) {$R_1$} 
			node[format, below left of=x41] (r2) {$R_2$} 
			node[square, right of=x31] (r4) {$r_4$} 
			node[format, below of=r4] (x4) {}
			node[format, below of=r2] (x1) {$X_1$} 
			node[format, below of=r1] (x2) {$X_2$} 
			(r1) edge[red, <->] (x2)
			(x41) edge[blue] (r2)			
			(x41) edge[blue] (r1)
			(r2)   edge[blue] (r1)	
			(r1) edge[gray] (x1)
			(r2) edge[gray] (x2)	
			(x11) edge[blue, bend right=30] (r2)
			(x11) edge[gray, bend right=30] (x1)
			node [below of=x1, xshift=1.2cm, yshift=0.7cm] {(e)}
			;
	\end{scope}
	\begin{scope}[xshift=9cm, yshift=-6cm]
		\path[->, very thick]
			node[format] (x11) {$X_1$}
			node[square, right of=x11] (x31) {$o_3$}
			node[square, below of=x31] (x41) {$x_4$} 
			node[square, below right of=x41] (r1) {$r_1$} 
			node[format, below left of=x41] (r2) {$R_2$} 
			node[square, right of=x31] (r4) {$r_4$} 
			node[format, below of=r4] (x4) {}
			node[format, below of=r2] (x1) {} 
			node[format, below of=r1] (x2) {$X_2$} 
			(x41) edge[blue] (r2)			
			(r2) edge[gray] (x2)	
			(x11) edge[blue, bend right=30] (r2)
			node [below of=x1, xshift=1.2cm, yshift=0.7cm] {(f)}
			;
	\end{scope}
\end{tikzpicture}
}
\end{center}
\caption{A DAG where variables besides $R$s are required to be fixed. }
\label{fig:UAI}
\end{figure}

\section{A NEW IDENTIFICATION ALGORITHM}
\label{sec:alg}

In order to identify the target law in examples discussed in the previous section, we had to consider situations where some variables were viewed as hidden, and marginalized out, and others were conditioned on, introducing selection bias.  In addition, fixing operations were performed according to a partial, rather than a total, order as was the case in causal inference problems.  Finally, we sometimes fixed sets of variables jointly, rather than individual variables.  We now introduce relevant definitions that allow us to formulate 
a general identification algorithm that takes advantage of all these techniques. 

Let ${\bf V}$ be a set of random variables (and corresponding vertices) consisting of observed variables ${\bf O},{\bf R},{\bf X}$, missing variables ${\bf X}^{({\bf 1})}$, and selected variables ${\bf S}$. Let ${\bf W}$ be a set of fixed observed variables.
The following definitions apply to a 
{latent projection ${\cal G}({\bf V} \setminus {\bf X}^{({\bf 1})}_{\bf U},{\bf W})$, for some ${\bf X}^{({\bf 1})}_{\bf U} \subseteq {\bf X}^{({\bf 1})}$},
 and a corresponding kernel $q({\bf V} \setminus {\bf X}^{({\bf 1})}_{\bf U} | {\bf W}) \equiv \sum_{{\bf X}^{({\bf 1})}_{\bf U}} q({\bf V} | {\bf W})$. Graph
 {${\cal G}$ can be viewed as a latent variable CADMG for $q$ where ${\bf X}^{({\bf 1})}_{\bf U}$ are latent.} 
Such CADMGs represent intermediate subproblems in our identification algorithm.

For ${\bf Z} \subseteq {\bf D}_{\bf Z} \in {\cal D}({\cal G})$,
let ${\bf R}_{\bf Z} = \{ R_j | X_j^{(1)} \in {\bf Z} \cup \mb_{\cal G}({\bf Z}), R_j \not\in {\bf Z} \}$, and
$\mb_{\cal G}({\bf Z}) \equiv ({\bf D}_{\bf Z} \cup \pa_{\cal G}({\bf D}_{\bf Z})) \setminus {\bf Z}$.
We say ${\bf Z}$ is \emph{fixable} in ${\cal G}({\bf V} \setminus {\bf X}^{({\bf 1})}_{\bf U},{\bf W})$ if
\begin{enumerate}[label=(\roman*)]
\item 
$\de_{\cal G}({\bf Z}) \cap {\bf D}_{\bf Z} \subseteq {\bf Z}$,
\item ${\bf S} \cap {\bf Z} = \emptyset$,
\item ${\bf Z} \ci ({\bf S} \cup {\bf R}_{\bf Z}) \setminus \mb_{\cal G}({\bf Z}) | \mb_{\cal G}({\bf Z})$.
\end{enumerate}

In words, these conditions apply to some $\bf Z$ that is a subset of its own district (which is trivial when the set ${\bf Z}$ is a singleton). 
The conditions, in the listed order, require that ${\bf Z}$ is closed under descendants in the district, should not contain any selected variables, and should be independent of both selected variables $\bf S$ and the missingness indicators $\bf R_Z$ of the corresponding counterfactal parents given the Markov blanket of $\bf Z$, respectively. Consider the graph in Fig.~\ref{fig:setFix}(b) where $\bf S = \emptyset$ and let ${\bf Z} = \{R_1, R_3\}$. ${\bf Z}$ is fixable since ${\bf Z} \subseteq {\bf D}_{\bf Z} = \{R_1, R_3, X_2, X_4 \}, \de_{\cal G}({\bf Z}) = \{R_1, R_3, X_1, X_3\} \cap  {\bf D}_{\bf Z} = \{R_1, R_3\}$ is closed, and both ${\bf S}$ and ${\bf R_Z}$ are empty sets.  

A set $\widetilde{\bf Z}$ spanning multiple elements in ${\cal D}({\cal G})$ is said to be fixable if it can be partitioned into a set ${\cal Z}$ of elements ${\bf Z}$, such that each ${\bf Z}$ is a subset of a single district in ${\cal D}({\cal G})$ and is fixable. 

Given an ordering $\prec$ on vertices ${\bf V}\cup{\bf W}$ topological in ${\cal G}$ and $\widetilde{\bf Z}$ fixable in ${\cal G}$,
define $\phi_{\widetilde{\bf Z}}(q; {\cal G})$ as
{\scriptsize
\begin{align}
\!\!\!\!\!\!
\frac{
q({\bf V} \setminus ({\bf X}^{({\bf 1})}_{\bf U} \cup {\bf R}_{\bf Z}), {\bf R_Z=1} | {\bf W})
}{
\prod\limits_{{\bf Z} \in {\cal Z}}
\prod\limits_{Z \in {\bf Z}} q(Z | \mb_{\cal G}(Z; \an_{\cal G}({\bf D}_{\bf Z}) \cap \{\preceq Z\})), {\bf R_Z}) \vert_{({\bf R} \cap {\bf Z}) \cup {\bf R}_{\bf Z} = {\bf 1}}
},
\label{eqn:fix-set}
\end{align}
}%
where $\mb_\G(V; {\bf S})\equiv \mb_{\G_{\bf S}}(V)$ and $\{ \preceq Z \} $ is the set of all elements earlier than $Z$ in the order $\prec$ (this includes $Z$ itself).

Given a set ${\bf Z} \subseteq {\bf R}\cup{\bf O}\cup{\bf X}^{({\bf 1})}$, and an equivalence relation $\sim$, let $\quotient{{\bf Z}}{\sim}$ be the partition of ${\bf Z}$ into equivalence classes according to $\sim$.
Define a \emph{fixing schedule} for $\quotient{{\bf Z}}{\sim}$ to be a partial order $\lhd$ on $\quotient{{\bf Z}}{\sim}$.
For each ${\bf Z} \in \quotient{{\bf Z}}{\sim}$, define $\{\unlhd \widetilde{\bf Z}\}$ to be the set of elements in $\quotient{{\bf Z}}{\sim}$ earlier than $\widetilde{\bf Z}$ in the order $\lhd$, and $\{ \lhd \widetilde{\bf Z} \} \equiv \{ \unlhd \widetilde{\bf Z}\} \setminus \widetilde{\bf Z}$.
Define $\unlhd_{\widetilde{\bf Z}}$ and $\lhd_{\widetilde{\bf Z}}$ to be restrictions of $\lhd$ to $\{ \unlhd \widetilde{\bf Z} \}$ and $\{ \lhd \widetilde{\bf Z} \}$, respectively. Both restrictions, $\unlhd_{\widetilde{\bf Z}}$ and $\lhd_{\widetilde{\bf Z}}$, are also partial orders.

We inductively define a \emph{valid} fixing schedule (a schedule where fixing operations can be successfully implemented), along with the fixing operator on valid schedules.  The fixing operator will implement fixing as in (\ref{eqn:fix-set}) on $\widetilde{\bf Z}$ within an intermediate problem represented by a CADMG where some ${\bf X}^{({\bf 1})}_{\widetilde{\bf Z}} \subseteq {\bf X}^{({\bf 1})}$ will become observed after fixing $\widetilde{{\bf Z}}$, with ${\bf X^{(1)} \setminus X^{(1)}_{\widetilde{{\bf Z}}}}$
treated as latent variables, and a kernel associated with this CADMG defined on the observed subset of variables. We also define
${\bf X}^{({\bf 1})}_{\{ \unlhd \widetilde{\bf Z} \}} \equiv \bigcup_{{\bf Z} \in \{ \unlhd \widetilde{\bf Z} \}} {\bf X}^{({\bf 1})}_{\bf Z}$.



We say $\lhd_{\widetilde{\bf Z}}$ is valid for $\{ \lhd {\widetilde{\bf Z}} \}$ in ${\cal G}$ if for every $\lhd$-largest element $\widetilde{\bf Y}$ of $\{ \lhd {\widetilde{\bf Z}} \}$, $\unlhd_{\widetilde{\bf Y}}$ is valid for $\{ \unlhd \widetilde{\bf Y} \}$.
If $\lhd_{\widetilde{\bf Z}}$ is valid for $\{ \lhd {\widetilde{\bf Z}} \}$,
we define $\phi_{\lhd_{{\widetilde{\bf Z}}}}({\cal G})$
to be a new CADMG ${\cal G}({\bf V} \setminus \bigcup_{{\bf Z} \in \{ \lhd {\widetilde{\bf Z}} \}} {\bf Z}, {\bf W} \cup \bigcup_{{\bf Z} \in \{ \lhd {\widetilde{\bf Z}} \}} {\bf Z})$ obtained from ${\cal G}({\bf V},{\bf W})$ by:
\begin{itemize}
\item Removing all edges with arrowheads into $\bigcup_{{\bf Z} \in \{ \lhd {\widetilde{\bf Z}} \}} {\bf Z}$,
\item Marking any $\{ X_j^{(1)} | X_j^{(1)} \in {\bf Z} \cup \mb_{{\phi}_{\lhd_{\bf Z}}({\cal G})}({\bf Z}), {\bf Z} \in \{ \lhd {\widetilde{\bf Z}} \} \}$ as observed,
\item Marking any $\{ {\bf R_Z} \cap {\bf V} | {\bf Z} \in \{ \lhd {\widetilde{\bf Z}} \} \} \setminus \bigcup_{{\bf Z} \in \{ \lhd {\widetilde{\bf Z}} \}} {\bf Z}$
as selected to value $1$, where ${\bf R_Z}$ is defined with respect to $\phi_{\lhd_{{\bf Z}}}({\cal G})$
\item Treating elements of ${\bf X^{(1)} \setminus X^{(1)}_{\widetilde{{\bf Z}}}}$ as hidden variables.
\end{itemize}
%
%
%
We say $\unlhd_{\widetilde{\bf Z}}$ is valid for $\{ \unlhd {\widetilde{\bf Z}} \}$ if $\lhd_{\widetilde{\bf Z}}$ is valid for $\{ \lhd {\widetilde{\bf Z}} \}$ and
$\widetilde{\bf Z}$ is fixable in $\phi_{\lhd_{\widetilde{\bf Z}}}({\cal G})$.  If $\unlhd_{\widetilde{\bf Z}}$ is valid, we define 
{\small
\begin{align}
\phi_{\unlhd_{\widetilde{\bf Z}}}(q; {\cal G}) \equiv \phi_{\widetilde{\bf Z}}\left(
\phi_{\lhd_{\widetilde{\bf Z}}}(q; {\cal G})
;
\phi_{\lhd_{\widetilde{\bf Z}}}({\cal G})\right),
\label{eqn:ipw}
\end{align}
}
where
$\phi_{\lhd_{\widetilde{\bf Z}}}(q; {\cal G})\equiv \frac{
q({\bf V} | {\bf W})
}{
\prod_{\widetilde{\bf Y} \in \{\lhd \widetilde{\bf Z}\}} q_{\widetilde{\bf Y}}
}$, and $q_{\widetilde{\bf Y}}$ are defined inductively as the denominator of (\ref{eqn:fix-set}) for $\widetilde{\bf Y}$,
$\phi_{\lhd_{\widetilde{\bf Y}}}({\cal G})$ and $\phi_{\lhd_{\widetilde{\bf Y}}}(q; {\cal G})$.

We have the following claims.
\begin{prop}
Given a DAG ${\cal G}({\bf X}^{({\bf 1})}, {\bf R}, {\bf O}, {\bf X})$, the distribution $p(R_i | \pa_{\cal G}(R_i)) \vert_{\pa_{\cal G}(R_i) \cap {\bf R} = {\bf 1}}$ is identifiable from $p({\bf R},{\bf O},{\bf X})$ if there exists
\begin{enumerate}[label=(\roman*)]
\item
${\bf Z} \subseteq {\bf X}^{({\bf 1})} \cup {\bf R} \cup {\bf O}$, 

\item an equivalence relation $\sim$ on ${\bf Z}$ such that $\{ R_i \} \in \quotient{\bf Z}{\sim}$, 

\item a set of elements ${\bf X}^{({\bf 1})}_{\widetilde{\bf Z}}$ such that ${\bf X}^{({\bf 1})}_{\{\lhd \widetilde{\bf Z} \}} \subseteq {\bf X}^{({\bf 1})}_{\widetilde{\bf Z}} \subseteq {\bf X}^{({\bf 1})}$ for each $\widetilde{\bf Z} \in \quotient{\bf Z}{\sim}$,

\item ${\bf X}^{({\bf 1})} \cap \pa_{\cal G}(R_i) \subseteq ({\bf Z} \setminus \{ R_i \}) \cup {\bf X}^{({\bf 1})}_{\{ R_i \}}$,

\item and a valid fixing schedule $\lhd$ for $\quotient{\bf Z}{\sim}$ in ${\cal G}$ such that for each $\widetilde{\bf Z} \in \quotient{\bf Z}{\sim}$, $\widetilde{\bf Z} \lhd \{ R_i \}$.
\end{enumerate}
Moreover, $p(R_i | \pa_{\cal G}(R_i)) \vert_{\pa_{\cal G}(R_i) \cap {\bf R} = {\bf 1}}$ is equal to $q_{\{ R_i \}}$, defined inductively as the denominator of (\ref{eqn:fix-set}) for $\{ R_i \}$, $\phi_{\lhd_{\{ R_i \}}}({\cal G})$ and $\phi_{\lhd_{\{ R_i \}}}(p; {\cal G})$, and evaluated at
$\pa_{\cal G}(R_i) \cap {\bf R} = {\bf 1}$.
\label{prop:r}
\end{prop}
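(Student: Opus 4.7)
The plan is to prove the proposition by strong induction along the partial order $\lhd$, establishing for each $\widetilde{\bf Y} \in \quotient{\bf Z}{\sim}$ with $\widetilde{\bf Y} \unlhd \{R_i\}$ that $\phi_{\unlhd_{\widetilde{\bf Y}}}(p;{\cal G})$ is (a) a function of the observed law $p({\bf R},{\bf O},{\bf X})$, and (b) a well-defined kernel on the CADMG $\phi_{\unlhd_{\widetilde{\bf Y}}}({\cal G})$ whose latent variables are precisely ${\bf X}^{({\bf 1})} \setminus {\bf X}^{({\bf 1})}_{\widetilde{\bf Y}}$. The base case is the empty schedule, where $p({\bf X}^{({\bf 1})},{\bf R},{\bf O},{\bf X})$ factorizes according to ${\cal G}$, and latent projection onto the initially observable variables yields a kernel obeying the nested factorization for the associated ADMG by the result of \cite{richardson17nested}.

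For the inductive step, fix $\widetilde{\bf Y} \unlhd \{R_i\}$. By the induction hypothesis $\phi_{\lhd_{\widetilde{\bf Y}}}(p;{\cal G})$ is identified and lives on $\phi_{\lhd_{\widetilde{\bf Y}}}({\cal G})$. Because $\widetilde{\bf Y}$ is fixable there by validity of $\lhd$, each class piece ${\bf Z}$ in the partition of $\widetilde{\bf Y}$ satisfies conditions (i)--(iii). Condition (i) (closure under descendants within a single district) ensures that the Markov-blanket factor $q(Z \mid \mb_{\cal G}(Z;\an_{\cal G}({\bf D}_{\bf Z}) \cap \{\preceq Z\}), {\bf R}_{\bf Z})$ appearing in the denominator of (\ref{eqn:fix-set}) is well-defined and coincides with the intrinsic-set kernel of the nested Markov model on $\phi_{\lhd_{\widetilde{\bf Y}}}({\cal G})$. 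Conditions (ii) and (iii) (no selected variables in ${\bf Z}$, and d-separation of ${\bf Z}$ from ${\bf S} \cup {\bf R}_{\bf Z}$ off its Markov blanket) together guarantee that evaluating at ${\bf R}_{\bf Z} = {\bf 1}$ does not alter the relevant conditional, so the denominator is obtainable from the currently identified kernel's observed slice. Hence $\phi_{\unlhd_{\widetilde{\bf Y}}}(p;{\cal G})$ of (\ref{eqn:ipw}) is identified, and the graph-level update $\phi_{\unlhd_{\widetilde{\bf Y}}}({\cal G})$ records exactly the structural consequences: arrowheads into $\widetilde{\bf Y}$ are removed, newly observable counterfactuals in ${\bf X}^{({\bf 1})}_{\widetilde{\bf Y}}$ become manifest, and residual indicators in ${\bf R}_{\bf Z} \setminus \widetilde{\bf Y}$ convert into selection nodes.

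At $\widetilde{\bf Z} = \{R_i\}$, hypothesis (iv) gives ${\bf X}^{({\bf 1})} \cap \pa_{\cal G}(R_i) \subseteq ({\bf Z} \setminus \{R_i\}) \cup {\bf X}^{({\bf 1})}_{\{R_i\}}$, so every counterfactual parent of $R_i$ has been rendered observable by the earlier fixings. The fixability of $\{R_i\}$ in $\phi_{\lhd_{\{R_i\}}}({\cal G})$ then produces $q_{\{R_i\}}$ as the Markov-blanket conditional of $R_i$; combined with d-separation in the original ${\cal G}$ and the setting of the relevant ${\bf R}$-indicators to $1$, this coincides with $p(R_i \mid \pa_{\cal G}(R_i))|_{\pa_{\cal G}(R_i) \cap {\bf R} = {\bf 1}}$, completing the proof. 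The main obstacle is the inductive step: one must argue that the denominator in (\ref{eqn:fix-set}), though built from conditionals that nominally involve unobserved ${\bf X}^{({\bf 1})}$ entries, equals a functional of the currently identified kernel. This reduces to verifying that condition (iii) together with the nested Markov property of \cite{richardson17nested} allows each ${\bf X}^{({\bf 1})}$ parent of $Z$ to be replaced by its observed proxy once the appropriate ${\bf R}$-indicators are clamped to $1$; managing the interaction of selection bias, latent projection, and the partial-order schedule is the principal technical burden.
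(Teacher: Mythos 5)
Your proof follows essentially the same route as the paper's: an induction along the partial order $\lhd$ in which condition (iii) is used to show that each denominator of (\ref{eqn:fix-set}) is unchanged by clamping the accumulated selection indicators ${\bf R}_{\bf Z}$ to $1$ --- so the partial-order fixing in the missing-data problem coincides with ordinary total-order fixing in the auxiliary model where the relevant ${\bf X}^{({\bf 1})}$ are observed --- followed by an appeal to the nested Markov results of \cite{richardson17nested}. The only cosmetic difference is that the paper makes the fully-observed comparison functional $q^\dagger$ explicit, whereas you argue identifiability of each intermediate kernel directly.
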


Proposition \ref{prop:r} implies that $p(R_i | \pa_{\G}(R_i))$ is identified if we can find a set of variables that can be fixed according to a partial order (possibly through set fixing) within subproblems where certain variables are hidden. At the end of the fixing schedule, we require that $R_i$ itself is fixable given its Markov blanket in the original DAG.  We encourage the reader to view the example provided in Appendix B, for a demonstration of valid fixing schedules that may be chosen by Proposition \ref{prop:r}.

\begin{cor}
Given a DAG ${\cal G}({\bf X}^{({\bf 1})}, {\bf R}, {\bf O}, {\bf X})$, the target law $p({\bf X}^{({\bf 1})}, {\bf O})$ is identified if
$p(R_i | \pa_{\cal G}(R_i))$ is identified via Proposition \ref{prop:r} for every $R_i \in {\bf R}$.
\end{cor}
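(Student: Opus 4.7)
The plan is to reduce the identification of the target law to the identification of the full nuisance propensity $p({\bf R} = {\bf 1} \mid {\bf X}^{({\bf 1})}, {\bf O})$ via equation~(\ref{eqn:chain}), and then decompose that propensity into the per-indicator factors that Proposition~\ref{prop:r} already handles. Concretely, by the chain rule identity (\ref{eqn:chain}), $p({\bf X}^{({\bf 1})}, {\bf O}) = p({\bf X}, {\bf O}, {\bf R} = {\bf 1}) / p({\bf R} = {\bf 1} \mid {\bf X}^{({\bf 1})}, {\bf O})$. The numerator is a functional of the observed data law evaluated at ${\bf R} = {\bf 1}$ (where the deterministic definition of ${\bf X}$ gives $X_i = X_i^{({\bf 1})}$), so it suffices to identify the denominator.

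Next I would unpack the denominator using the DAG factorization (\ref{eqn:miss-fact}). Conditioning on ${\bf X}^{({\bf 1})} \cup {\bf O}$ cancels the $p(V \mid \pa_{\cal G}(V))$ factors for $V \in {\bf X}^{({\bf 1})} \cup {\bf O}$, and the deterministic proxy factors $p(X_i \mid R_i, X_i^{({\bf 1})})$ drop out because they are functions of the conditioning set together with ${\bf R}$. Equivalently, I would pick any topological order on ${\bf R}$ consistent with ${\cal G}$, iteratively apply the chain rule, and invoke the local Markov property ($R_i \ci \nd_{\cal G}(R_i) \setminus \pa_{\cal G}(R_i) \mid \pa_{\cal G}(R_i)$) along with the structural constraint that $\de_{\cal G}(R_i) \cap ({\bf X}^{({\bf 1})} \cup {\bf O}) = \emptyset$ to conclude
\begin{align*}
p({\bf R} \mid {\bf X}^{({\bf 1})}, {\bf O}) = \prod_{R_i \in {\bf R}} p(R_i \mid \pa_{\cal G}(R_i)).
\end{align*}
Evaluating at ${\bf R} = {\bf 1}$ yields $p({\bf R} = {\bf 1} \mid {\bf X}^{({\bf 1})}, {\bf O}) = \prod_{R_i} p(R_i \mid \pa_{\cal G}(R_i))\vert_{\pa_{\cal G}(R_i) \cap {\bf R} = {\bf 1}}$, since whenever $R_j \in \pa_{\cal G}(R_i)$, the outer evaluation forces $R_j = 1$.

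By hypothesis, every factor on the right-hand side is identified via Proposition~\ref{prop:r}, so the product---and hence $p({\bf R} = {\bf 1} \mid {\bf X}^{({\bf 1})}, {\bf O})$---is a known functional of $p({\bf R}, {\bf O}, {\bf X})$. Substituting back into (\ref{eqn:chain}) gives an explicit identifying formula for $p({\bf X}^{({\bf 1})}, {\bf O})$, completing the proof. There is no substantive obstacle here: the only mildly delicate point is verifying the propensity factorization, which reduces to a standard DAG Markov argument together with the definitional restriction that indicators in ${\bf R}$ have no descendants among ${\bf X}^{({\bf 1})} \cup {\bf O}$; everything else is bookkeeping on top of Proposition~\ref{prop:r}.
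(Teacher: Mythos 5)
Your proposal is correct and follows essentially the same route as the paper, whose proof is the one-line observation that the corollary follows from Proposition \ref{prop:r} and equation (\ref{eqn:chain}); you have simply made explicit the intermediate factorization $p({\bf R} \mid {\bf X}^{({\bf 1})}, {\bf O}) = \prod_{R_i \in {\bf R}} p(R_i \mid \pa_{\cal G}(R_i))$, which the paper itself invokes earlier (in the discussion of Fig.~\ref{fig:blockseq}(d)) and which is justified exactly as you say, by the DAG factorization together with the restriction $\de_{\cal G}(R_i) \cap ({\bf X}^{({\bf 1})} \cup {\bf O}) = \emptyset$. Your remark that the evaluation at ${\bf R}={\bf 1}$ forces $\pa_{\cal G}(R_i) \cap {\bf R} = {\bf 1}$ in each factor, matching the restricted form identified by Proposition \ref{prop:r}, is the right (and only mildly nontrivial) bookkeeping point.
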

\begin{proof}
Follows by Proposition \ref{prop:r} and (\ref{eqn:chain}).
\end{proof}

In addition, in special classes of models, the full law, rather than just the target law is identified.
\begin{prop}
Given a DAG ${\cal G}({\bf X}^{({\bf 1})}, {\bf R}, {\bf O}, {\bf X})$, the full law $p({\bf R},{\bf X}^{({\bf 1})}, {\bf O})$
is identifiable from $p({\bf R},{\bf O},{\bf X})$ if for every $R_i \in {\bf R}$, all conditions in Proposition \ref{prop:r} (i-v) are met, and also for each ${\bf \widetilde{Z}} \in \quotient{\bf Z}{\sim}$, ${\bf X}^{({\bf 1})}_{\bf \widetilde{Z}}$ does not contain any elements in $\{ X^{(1)}_j | R_j \in \pa_{\cal G}(R_i) \}$. 
Moreover, $p(R_i | \pa_{\cal G}(R_i))$ is equal to $q_{\{ R_i \}}$, defined inductively as the denominator of (\ref{eqn:fix-set}) for $\{ R_i \}$,
$\phi_{\lhd_{\{ R_i \}}}({\cal G})$ and $\phi_{\lhd_{\{ R_i \}}}(p; {\cal G})$, and
{\small
\begin{align*}
p({\bf R},{\bf X}^{({\bf 1})}, {\bf O})
=
\Big( \prod_{R_i \in {\bf R}} q_{R_i} \Big) \times
\frac{
p({\bf R}={\bf 1},{\bf O},{\bf X})
}{
\left( \prod_{R_i \in {\bf R}} q_{R_i} \right) \vert_{{\bf R} = {\bf 1}}
}
\end{align*}
}
\label{prop:r-full}
\end{prop}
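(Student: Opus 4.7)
The plan is to bootstrap from Proposition \ref{prop:r} and the chain rule identity (\ref{eqn:chain}). The key observation is that the additional hypothesis strengthens the conclusion of Proposition \ref{prop:r}: rather than only identifying $p(R_i \mid \pa_{\cal G}(R_i))$ at the single evaluation $\pa_{\cal G}(R_i) \cap {\bf R} = {\bf 1}$, the functional $q_{\{R_i\}}$ in fact identifies $p(R_i \mid \pa_{\cal G}(R_i))$ as a function of \emph{every} value of $\pa_{\cal G}(R_i)$. I would argue this by tracing through (\ref{eqn:fix-set}) and the inductive construction of $\phi_{\lhd_{\widetilde{\bf Z}}}$: an $R_j$ gets clamped to $1$ in an intermediate kernel precisely when the schedule promotes the corresponding $X_j^{(1)}$ to the observed set at some step, since ``$X_j^{(1)}$ observed'' is equivalent to conditioning on $R_j=1$ via the deterministic consistency factor $p(X_j \mid R_j, X_j^{(1)})$. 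The supplementary hypothesis of Proposition \ref{prop:r-full} forbids exactly this promotion for $R_j \in \pa_{\cal G}(R_i)$, so those $R_j$'s remain random arguments in $q_{\{R_i\}}$, and every value of $\pa_{\cal G}(R_i)$ is reachable.

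Next, I would use the DAG factorization (\ref{eqn:miss-fact}), together with the structural constraint $\de_{\cal G}(R_i) \cap ({\bf X}^{(1)} \cup {\bf O}) = \emptyset$, to decompose
\begin{align*}
p({\bf R}, {\bf X}^{({\bf 1})}, {\bf O})
= p({\bf X}^{({\bf 1})}, {\bf O}) \prod_{R_i \in {\bf R}} p(R_i \mid \pa_{\cal G}(R_i)).
\end{align*}
Combining this with (\ref{eqn:chain}) gives $p({\bf X}^{({\bf 1})}, {\bf O}) = p({\bf X}, {\bf O}, {\bf R}={\bf 1}) / \prod_i p(R_i \mid \pa_{\cal G}(R_i))\vert_{{\bf R}={\bf 1}}$, where the denominator is identified by the first step evaluated at ${\bf R} = {\bf 1}$.

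Substituting the identified target law back into the factorization of the full law, and writing $q_{R_i}$ for the identifying functional supplied by the strengthened Proposition \ref{prop:r}, yields
\begin{align*}
p({\bf R}, {\bf X}^{({\bf 1})}, {\bf O})
= \Big(\prod_{R_i \in {\bf R}} q_{R_i}\Big) \times \frac{p({\bf R}={\bf 1}, {\bf O}, {\bf X})}{\left(\prod_{R_i \in {\bf R}} q_{R_i}\right)\vert_{{\bf R}={\bf 1}}},
\end{align*}
which is exactly the claimed formula.

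The main obstacle is the first step, namely verifying the strengthened identification claim for each $R_i$. Concretely, this amounts to checking that along the valid fixing schedule $\lhd$ certified by Proposition \ref{prop:r}, neither the selected set nor the ${\bf R}_{\bf Z}$ sets arising at any intermediate step contain an $R_j \in \pa_{\cal G}(R_i)$; the extra hypothesis ${\bf X}^{({\bf 1})}_{\widetilde{\bf Z}} \cap \{X_j^{(1)} \mid R_j \in \pa_{\cal G}(R_i)\} = \emptyset$ is what rules both out, so no such $R_j$ is ever evaluated at $1$ in the denominators of (\ref{eqn:fix-set}). Once this bookkeeping is in place, steps two and three are standard applications of the DAG factorization and the chain rule.
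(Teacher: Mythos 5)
Your proposal is correct and follows essentially the same route as the paper's own (much terser) proof: Proposition \ref{prop:r} supplies each $q_{\{R_i\}}$, the extra hypothesis on ${\bf X}^{({\bf 1})}_{\widetilde{\bf Z}}$ is used precisely to guarantee that no $R_j \in \pa_{\cal G}(R_i)$ is ever clamped to $1$ (by fixing or selection) along the schedule, so $p(R_i \mid \pa_{\cal G}(R_i))$ is recovered at all levels of its parents, and the displayed formula then follows from the DAG factorization of $p({\bf R}\mid{\bf X}^{({\bf 1})},{\bf O})$ together with \eqref{eqn:chain}. The bookkeeping you spell out in your final paragraph is exactly the content the paper compresses into the sentence ``under the additional restriction, all $R_j \in \pa_{\G}(R_i)$ can be evaluated at all levels.''
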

\begin{proof}
	Under conditions (i-v) in Proposition \ref{prop:r}, we are guaranteed to identify the target law and obtain $p(R_i | \pa_{\G}(R_i))$ where some $R_j \in \pa_{\G}(R_i)$ may be evaluated at $R_j = 1$. Under the additional restriction stated above, all $R_j \in \pa_{\G}(R_i)$ can be evaluated at  all levels.
\end{proof}

Proposition \ref{prop:r-full} always fails if a special collider structure $X_j^{(1)} \to R_i \gets R_j$, which we call \emph{the colluder}, exists in ${\cal G}$. The following Lemma implies that colluders always imply the full law is not identified.


\begin{lemma}
\label{lem:colluder}
	In a DAG  ${\cal G}({\bf X}^{({\bf 1})}, {\bf R}, {\bf O}, {\bf X})$, if there exists $R_i, R_j \in {\bf R}$ such that $\{R_j, X^{(1)}_j \} \in \pa_{\cal G}(R_i)$, then $p(R_i | \pa_{\cal G}(R_i)) \vert_{R_j = 0} $ is not identified. Hence, the full law $p({\bf X}^{({\bf 1})}, {\bf R})$ is not identified. 
\end{lemma}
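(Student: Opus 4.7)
My plan is to establish non-identification by exhibiting two full data laws $p_1$ and $p_2$, both factorizing according to $\mathcal{G}$ via \eqref{eqn:miss-fact}, that induce the same observed data law $p({\bf R},{\bf O},{\bf X})$ but assign different values to $p(R_i \mid \pa_\mathcal{G}(R_i))\vert_{R_j = 0}$. Since the full law $p({\bf X}^{(1)}, {\bf R})$ determines every conditional in its own DAG factorization, non-identification of this conditional automatically implies non-identification of the full law, so the two claims collapse into one.

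For the construction, I would let $p_1$ and $p_2$ share every factor $p(V \mid \pa_\mathcal{G}(V))$ for $V \neq R_i$, and also share the $R_i$ factor on the slice $R_j = 1$. Writing $\pa' \equiv \pa_\mathcal{G}(R_i)\setminus\{X_j^{(1)},R_j\}$, I would then pick the two conditionals $p_1(R_i \mid X_j^{(1)}, R_j = 0, \pa')$ and $p_2(R_i \mid X_j^{(1)}, R_j = 0, \pa')$ to be distinct functions of $X_j^{(1)}$ that nevertheless agree after averaging against $p(X_j^{(1)} \mid \text{rest})$. Taking $X_j^{(1)}$ binary with a non-degenerate marginal, an asymmetric choice such as $(0.4,0.6)$ versus $(0.6,0.4)$ paired with a uniform $X_j^{(1)}$ prior works: both integrate to the same marginal probability $0.5$ for $R_i=1$ at $R_j=0$.

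I would then verify that the observed data law is preserved. On rows with $R_j = 1$, the proxy $X_j$ equals $X_j^{(1)}$, no $X_j^{(1)}$-marginalization is needed, and the two laws coincide because every factor including the $R_i$ factor on that slice is shared. On rows with $R_j = 0$, the proxy is $X_j = \text{``?"}$ and the observed data law requires summing $X_j^{(1)}$ out, so the $R_i$ factor enters only through its $X_j^{(1)}$-marginal, which agrees by construction. The main obstacle is accounting for potential other descendants of $X_j^{(1)}$, which could couple the perturbation of the $R_i$ factor to other observables; I would handle this either by specializing to a submodel in which $X_j^{(1)}$ has no descendants outside $\{R_i\}$ (a DAG model that fails to identify a submodel fails in general), or by choosing the perturbation to lie in the null space of the weighted integrals induced by those additional descendants, which is non-trivial whenever $X_j^{(1)}$ has at least two values. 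The two full laws then coincide on observed data while differing on $p(R_i \mid \pa_\mathcal{G}(R_i))\vert_{R_j = 0}$ by construction, yielding the lemma.
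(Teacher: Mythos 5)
Your proposal is correct and takes essentially the same route as the paper: the paper also proves non-identification by exhibiting two full laws that agree on the observed law but split the censored mixture $\sum_{X_j^{(1)}} p(X_j^{(1)})\, p(R_i \mid X_j^{(1)}, R_j=0, \ldots)$ differently, instantiated concretely as the two-variable colluder DAG whose parameters $d,f$ enter the observed law only through $db+f(1-b)$ (Appendix C), and then asserted to transfer to any DAG containing the colluder. Your explicit treatment of other descendants of $X_j^{(1)}$ (restricting to a submodel, or perturbing in the null space of the induced integral constraints) is precisely the embedding step the paper leaves implicit in that final transfer remark.
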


\begin{proof}
	Follows by providing two different full laws that agree on the observed law on a DAG with 2 counterfactual random variables (Appendix C). This result holds for an arbitrary DAG representing a missing data model that contains the colluder structure mentioned above. 
\end{proof}

Propositions \ref{prop:r} and \ref{prop:r-full} do not address a computationally efficient search procedure for 
a valid fixing schedule $\lhd$ that permit identification of $p(R_i | \pa_{\cal G}(R_i))$ for a particular $R_i \in {\bf R}$.  
Nevertheless, the following Lemma shows how to easily obtain identification of the target law in a restricted class of missing data DAGs.

\begin{lemma}
\label{lem:property}
Consider a DAG ${\cal G}({\bf X}^{({\bf 1})}, {\bf R}, {\bf O}, {\bf X})$ such that for every $R_i \in {\bf R}$,
$\{ R_j | X_j^{(1)} \in \pa_{\cal G}(R_i) \} \cap \an_{\cal G}(R_i) = \emptyset$.  Then for every $R_i \in {\bf R}$, a fixing schedule $\lhd$
for $\{ \{ R_j \} | R_j \in {\cal G}_{{\bf R} \cap \de_{\cal G}(R_i)} \}$ given by the partial order induced by the ancestrality relation on
${\cal G}_{{\bf R} \cap \de_{\cal G}(R_i)}$ is valid in ${\cal G}({\bf X}^{({\bf 1})}, {\bf R}, {\bf O}, {\bf X})$, by taking each $\bf X^{(1)}_{\widetilde{{\bf Z}}} =  \bigcup_{{\bf Z} \in \{ \unlhd \widetilde{\bf Z} \}} {\bf X}^{({\bf 1})}_{\bf Z}$,
 for every $\widetilde{\bf Z} \in \{ \unlhd \{ R_i \} \}$. Thus the target law is identified.
\end{lemma}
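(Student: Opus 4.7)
The plan is to verify the five conditions of Proposition \ref{prop:r} for every $R_i \in {\bf R}$, so that each propensity score $p(R_i \mid \pa_{\cal G}(R_i))$ is identified; the Corollary to Proposition \ref{prop:r} then gives identification of the target law via (\ref{eqn:chain}). For a fixed $R_i$, I would take ${\bf Z} = {\bf R} \cap \de_{\cal G}(R_i)$, let $\sim$ be the trivial (singleton) equivalence relation so $\quotient{\bf Z}{\sim} = \{\{R_j\} : R_j \in {\bf Z}\}$, and define $\{R_a\} \lhd \{R_b\}$ iff $R_b$ is a strict ancestor of $R_a$ in ${\cal G}_{{\bf R} \cap \de_{\cal G}(R_i)}$. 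Acyclicity of $\cal G$ makes this a well-defined partial order with $\{R_i\}$ as its unique $\lhd$-greatest element, which settles conditions (i), (ii), and (v) of Proposition \ref{prop:r}. Setting ${\bf X}^{(1)}_{\widetilde{\bf Z}} = \bigcup_{{\bf Z}' \in \{\unlhd \widetilde{\bf Z}\}} {\bf X}^{(1)}_{{\bf Z}'}$ as prescribed makes the monotonicity condition (iii) automatic.

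Condition (iv) requires that every counterfactual parent of $R_i$ lies in ${\bf X}^{(1)}_{\{R_i\}}$, equivalently that for every $X_j^{(1)} \in \pa_{\cal G}(R_i)$ the corresponding $R_j$ belongs to ${\bf R} \cap \de_{\cal G}(R_i)$. The hypothesis $\{R_j : X_j^{(1)} \in \pa_{\cal G}(R_i)\} \cap \an_{\cal G}(R_i) = \emptyset$ rules out $R_j \in \an_{\cal G}(R_i)$; combined with the missing-data DAG structural constraints that determine how $R_j$ and $X_j^{(1)}$ can sit relative to $R_i$, I expect this to force $R_j \in \de_{\cal G}(R_i)$, completing (iv). What remains is verifying that $\lhd$ is a valid fixing schedule, which I would handle by induction along $\lhd$ from the minimal singletons up to $\{R_i\}$, checking the three fixability conditions in the intermediate CADMG $\phi_{\lhd_{\{R_j\}}}({\cal G})$ at each step. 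Descendant-closure in the district should be preserved as earlier $R$'s are removed from the random side, with the inductive hypothesis controlling what bidirected structure arises from projecting out the still-hidden counterfactuals; non-selection of $R_j$ then follows by tracking the marking rule, since an ancestrality-minimal unfixed singleton cannot have been selected by any earlier fixing step.

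The main obstacle will be the conditional-independence clause $R_j \ci ({\bf S} \cup {\bf R}_{R_j}) \setminus \mb_{\cal G'}(R_j) \mid \mb_{\cal G'}(R_j)$ in the intermediate CADMG $\cal G'$: both the latent projection of still-unfixed counterfactuals and the accumulating selection bias could in principle open d-connecting paths between $R_j$ and indicators in ${\bf R}_{R_j}$. The leverage here is precisely the hypothesis of the lemma, which forbids exactly the colluder-shaped configurations $X_k^{(1)} \to R_j \gets R_k$ with $R_k$ ancestral to $R_j$ that would create such a path. A careful d-separation argument in the original $\cal G$, propagated through the latent projection and selection markings in $\lhd$-order, should deliver the required independence. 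Once (iii) is secured for every singleton in $\quotient{\bf Z}{\sim}$, Proposition \ref{prop:r} applies to $R_i$, and running this for every $R_i \in {\bf R}$ together with the Corollary identifies the target law.
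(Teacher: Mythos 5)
Your overall strategy---verifying conditions (i)--(v) of Proposition \ref{prop:r} for each $R_i$ with ${\bf Z} = {\bf R} \cap \de_{\cal G}(R_i)$, singleton equivalence classes, and the ancestral partial order---is the same as the paper's, but two steps do not go through as proposed. First, your argument for condition (iv) rests on the claim that the hypothesis forces $R_j \in \de_{\cal G}(R_i)$ whenever $X_j^{(1)} \in \pa_{\cal G}(R_i)$. It does not: the hypothesis only excludes $R_j \in \an_{\cal G}(R_i)$, and $R_j$ may be incomparable to $R_i$ (neither ancestor nor descendant), in which case $R_j$ is simply absent from the fixing schedule and $X_j^{(1)}$ is never rendered observed by fixing its indicator. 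The paper sidesteps this entirely: with the prescribed choice of ${\bf X}^{(1)}_{\widetilde{\bf Z}}$ no counterfactual is ever latent-projected, so ${\bf X}^{(1)}_{\{R_i\}}$ contains all counterfactual parents of $R_i$ and condition (iv) holds trivially; the evaluation at $R_j = 1$ for an incomparable $R_j$ is then licensed by the fixability independence, since such an $R_j$ is a non-descendant, non-parent of $R_i$.

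Second, and more importantly, you defer the central step---checking the three fixability conditions in each intermediate CADMG, in particular the conditional-independence clause---to an unspecified ``careful d-separation argument,'' and you anticipate having to control ``bidirected structure arising from projecting out the still-hidden counterfactuals.'' The crux of the paper's proof is that no such structure ever arises in this restricted class: because nothing in ${\bf X}^{(1)}$ is projected out, every intermediate graph $\phi_{\lhd_{\{R_j\}}}({\cal G})$ is a conditional DAG with no bidirected edges, so every district is a singleton and ${\bf D}_{\{R_j\}} = \{R_j\}$; descendant-closure is immediate from fixing descendants first; any selected vertex is a non-descendant of $R_j$ (a selected descendant would already have been fixed); and the independence $R_j \ci ({\bf S} \cup {\bf R}_{\{R_j\}}) \setminus \mb(R_j) \mid \mb(R_j)$ then follows from the ordinary DAG Markov blanket property, using the ancestral schedule and the lemma's hypothesis to place ${\bf R}_{\{R_j\}}$ among the already-fixed or non-descendant vertices. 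Without this structural observation your induction has no engine, so the proof as written does not close.
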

\section{DISCUSSION AND CONCLUSION}
\label{sec:conclusion}

In this paper we addressed the significant gap present in identification theory for missing data models representable as DAGs. We showed, by examples, that straightforward application of identification machinery in causal inference with hidden variables do not suffice for identification in missing data, and discussed the generalizations required to make it suitable for this task. These generalizations included fixing (possibly sets of) variables on a partial order and avoiding selection bias by introducing hidden variables into the problem though they were not present in the initial problem statement. Proposition \ref{prop:r} gives a characterization of how to utilize these generalized procedures to obtain identification of the target law, while Proposition \ref{prop:r-full} gives a similar characterization for the full law. While neither of these propositions alluded to a computationally efficient algorithm to obtain identification in general, Lemma \ref{lem:property} provides such a procedure for a special class of missing data models where the partial order of fixing operations required for each $R$ is easy to determine. Providing a computationally efficient search procedure for identification in all DAG models of missing data, and questions regarding the completeness of our proposed algorithm are left for future work. 

\subsection*{Acknowledgements}
This project is sponsored in part by the National Institutes of Health grant R01 AI127271-01 A1 and the Office of Naval Research grant N00014-18-1-2760.

\clearpage
\bibliographystyle{plain}
\bibliography{references}

\begin{thebibliography}{10}

\bibitem{huang06do}
Yimin Huang and Marco Valtorta.
\newblock Pearl's calculus of interventions is complete.
\newblock In {\em Twenty Second Conference On Uncertainty in Artificial
  Intelligence}, 2006.

\bibitem{lauritzen96graphical}
Steffan~L. Lauritzen.
\newblock {\em Graphical Models}.
\newblock Oxford, U.K.: Clarendon, 1996.

\bibitem{mohan14missing}
Karthika Mohan and Judea Pearl.
\newblock Graphical models for recovering probabilistic and causal queries from
  missing data.
\newblock In {\em Advances in Neural Information Processing Systems}, pages
  1520--1528. 2014.

\bibitem{mohan2013missing}
Karthika Mohan, Judea Pearl, and Jin Tian.
\newblock Graphical models for inference with missing data.
\newblock In {\em Advances in Neural Information Processing Systems}, pages
  1277--1285, 2013.

\bibitem{pearl88probabilistic}
Judea Pearl.
\newblock {\em Probabilistic Reasoning in Intelligent Systems}.
\newblock Morgan and Kaufmann, San Mateo, 1988.

\bibitem{pearl09causality}
Judea Pearl.
\newblock {\em Causality: Models, Reasoning, and Inference}.
\newblock Cambridge University Press, 2 edition, 2009.

\bibitem{richardson17nested}
Thomas~S. Richardson, Robin~J. Evans, James~M. Robins, and Ilya Shpitser.
\newblock Nested {M}arkov properties for acyclic directed mixed graphs.
\newblock {\em arXiv:1701.06686v2}, 2017.
\newblock {W}orking paper.

\bibitem{robins86new}
James~M. Robins.
\newblock A new approach to causal inference in mortality studies with
  sustained exposure periods -- application to control of the healthy worker
  survivor effect.
\newblock {\em Mathematical Modeling}, 7:1393--1512, 1986.

\bibitem{robins97non}
James~M. Robins.
\newblock Non-response models for the analysis of non-monotone non-ignorable
  missing data.
\newblock {\em Statistics in Medicine}, 16:21--37, 1997.

\bibitem{rubin76inference}
D.~B. Rubin.
\newblock Causal inference and missing data (with discussion).
\newblock {\em Biometrika}, 63:581--592, 1976.

\bibitem{sadinle16itemwise}
Mauricio Sadinle and Jerome~P. Reiter.
\newblock Itemwise conditionally independent nonresponse modelling for
  incomplete multivariate data.
\newblock {\em Biometrika}, 104(1):207--220, 2017.

\bibitem{shpitser2016consistent}
Ilya Shpitser.
\newblock Consistent estimation of functions of data missing non-monotonically
  and not at random.
\newblock In {\em Advances in Neural Information Processing Systems}, pages
  3144--3152, 2016.

\bibitem{shpitser15missing}
Ilya Shpitser, Karthika Mohan, and Judea Pearl.
\newblock Missing data as a causal and probabilistic problem.
\newblock In {\em Proceedings of the Thirty First Conference on Uncertainty in
  Artificial Intelligence (UAI-15)}, pages 802--811. AUAI Press, 2015.

\bibitem{shpitser06id}
Ilya Shpitser and Judea Pearl.
\newblock Identification of joint interventional distributions in recursive
  semi-{M}arkovian causal models.
\newblock In {\em Proceedings of the Twenty-First National Conference on
  Artificial Intelligence (AAAI-06)}. AAAI Press, 2006.

\bibitem{tchetgen16discrete}
Eric~J. {Tchetgen Tchetgen}, Linbo Wang, and BaoLuo Sun.
\newblock Discrete choice models for nonmonotone nonignorable missing data:
  Identification and inference.
\newblock {\em Statistica Sinica}, 28(4):2069--2088, 2018.

\bibitem{tian02general}
Jin Tian and Judea Pearl.
\newblock A general identification condition for causal effects.
\newblock In {\em Eighteenth National Conference on Artificial Intelligence},
  pages 567--573, 2002.

\bibitem{tsiatis06missing}
Anastasios Tsiatis.
\newblock {\em Semiparametric Theory and Missing Data}.
\newblock Springer-Verlag New York, 1st edition edition, 2006.

\bibitem{zhou2010block}
Yan Zhou, Roderick J.~A. Little, and John~D. Kalbfleisch.
\newblock Block-conditional missing at random models for missing data.
\newblock {\em Statistical Science}, 25(4):517--532, 2010.

\end{thebibliography}

\clearpage

\section{APPENDIX}
\label{sec:appendix}

\subsection*{A. Proofs}
\begin{proa}{\ref{prop:r}}
	Given a DAG ${\cal G}({\bf X}^{({\bf 1})}, {\bf R}, {\bf O}, {\bf X})$, the distribution $p(R_i | \pa_{\cal G}(R_i)) \vert_{\pa_{\cal G}(R_i) \cap {\bf R} = {\bf 1}}$ is identifiable from $p({\bf R},{\bf O},{\bf X})$ if there exists
	\begin{enumerate}[label=(\roman*)]
		\item
		${\bf Z} \subseteq {\bf X}^{({\bf 1})} \cup {\bf R} \cup {\bf O}$, 
		
		\item an equivalence relation $\sim$ on ${\bf Z}$ such that $\{ R_i \} \in \quotient{\bf Z}{\sim}$, 
		
		\item a set of elements ${\bf X}^{({\bf 1})}_{\widetilde{\bf Z}}$ such that  ${\bf X}^{({\bf 1})}_{\{\lhd \widetilde{\bf Z} \}} \subseteq {\bf X}^{({\bf 1})}_{\widetilde{\bf Z}} \subseteq {\bf X}^{({\bf 1})}$ for each $\widetilde{\bf Z} \in \quotient{\bf Z}{\sim}$,
		
		\item ${\bf X}^{({\bf 1})} \cap \pa_{\cal G}(R_i) \subseteq ({\bf Z} \setminus \{ R_i \}) \cup {\bf X}^{({\bf 1})}_{\{ R_i \}}$,
		
		\item and a valid fixing schedule $\lhd$ for $\quotient{\bf Z}{\sim}$ in ${\cal G}$ such that for each $\widetilde{\bf Z} \in \quotient{\bf Z}{\sim}$, $\widetilde{\bf Z} \lhd \{ R_i \}$.
	\end{enumerate}
	Moreover, $p(R_i | \pa_{\cal G}(R_i)) \vert_{\pa_{\cal G}(R_i) \cap {\bf R} = {\bf 1}}$ is equal to $q_{\{ R_i \}}$, defined inductively as the denominator of (\ref{eqn:fix-set}) for $\{ R_i \}$, $\phi_{\lhd_{\{ R_i \}}}({\cal G})$ and $\phi_{\lhd_{\{ R_i \}}}(p; {\cal G})$, and evaluated at
	$\pa_{\cal G}(R_i) \cap {\bf R} = {\bf 1}$.
\end{proa}

\begin{proof}
	
	We first outline the essential argument made in this proof. 
	We will reformulate the process of fixing according to a partial order in a missing data problem as a problem of ordinary fixing based on a total order in a causal inference problem where, previously missing variables are in fact observed.
	If we are able to show this, we can invoke results from \cite{richardson17nested}, that guarantee that we obtain the desired conditional for each $R_i$.
	
	Consider $\widetilde{\bf Z} \in \quotient{\bf Z}{\sim}$, and define
	${\bf X}^{({\bf 1})}_{\{ \unlhd \widetilde{\bf Z} \}} \equiv \bigcup_{{\bf Z} \in \{ \unlhd \widetilde{\bf Z} \}} {\bf X}^{({\bf 1})}_{\bf Z}$,
	and 
	${\bf R}_{\{ \unlhd \widetilde{\bf Z} \}} \equiv \{ R_k | X^{(1)}_k \in {\bf X}^{({\bf 1})}_{\{ \unlhd \widetilde{\bf Z} \}} \}$, 
	and similarly for
	${\bf X}^{({\bf 1})}_{\{ \lhd \widetilde{\bf Z} \}}$ and ${\bf R}_{\{ \lhd \widetilde{\bf Z} \}}$.
	
	We first note that any total ordering $\prec$ on $\{ \lhd \widetilde{\bf Z} \}$ consistent with $\lhd$
	yields a valid fixing sequence on sets in $\{ \lhd \widetilde{\bf Z} \}$ in ${\cal G}({\bf R},{\bf O},{\bf X}^{({\bf 1})}, {\bf X}))$,
	where ${\bf X}^{({\bf 1})}_{\{ \lhd \widetilde{\bf Z} \}},{\bf R},{\bf O},{\bf X}$ are observed. The total ordering $\prec$ can be refined to operate on single variables where each set ${\bf \widetilde{Z}}$ is fixed as singletons following a topological total order where variables with no children in $\widetilde{{\bf Z}}$ would be fixed first.
	Such a total order is also valid and follows from the validity of $\lhd$ and the fact that at each step of the fixing operation in the total order, the Markov blanket of each $Z$ contains only observed variables; hence no selection bias is induced on any singleton variables $\{ \succ \widetilde{{\bf Z}} \}$. 

	We now show, by induction on the structure of the partial order $\lhd$, that for a particular $\widetilde{\bf Z} \in \quotient{\bf Z}{\sim}$,
	$q_{\widetilde{\bf Z}}$ is equal to
	{\small
		\begin{align}
		\!\!\!\!
		\prod\limits_{{\bf Z} \in {\cal Z}}
		\prod\limits_{Z \in {\bf Z}} \tilde{q}(Z | \mb_{\tilde{\cal G}}(Z; \an_{\tilde{\cal G}}({\bf D}_{\bf Z}) \cap \prec_{\tilde{\G}}\{Z \}, {\bf R}_{\bf Z}) \vert_{({\bf R} \cap {\bf Z}) \cup {\bf R}_{\bf Z} = {\bf 1}},
		\end{align}}%
	obtained from a kernel 
	\[
	\tilde{q} \equiv \phi_{\{ \lhd \widetilde{\bf Z} \}}(p({\bf R}, {\bf O}, {\bf X}^{({\bf 1})}_{\{ \lhd \widetilde{\bf Z} \}}, {\bf X}); {\cal G}),
	\]
	and CADMG 
	\[
	\tilde{\cal G} \equiv \phi_{\{ \lhd \widetilde{\bf Z} \}}({\cal G}({\bf R},{\bf O},{\bf X}^{({\bf 1})}_{\{ \lhd \widetilde{\bf Z} \}}, {\bf X})),
	\] 
	where ${\bf X}^{({\bf 1})}_{\{ \lhd \widetilde{\bf Z} \}},{\bf R},{\bf O},{\bf X}$ are observed.
	
	For any $\lhd$-smallest $\widetilde{\bf Z}$, $\widetilde{{\bf Z}}$ is independent of ${\bf R}_{\{ \unlhd \widetilde{\bf Z} \}}$ given its Markov blanket; therefore treating  ${\bf X}^{({\bf 1})}_{\{ \unlhd \widetilde{\bf Z} \}}$ as observed results in the same kernel as $q_{\widetilde{{\bf Z}}}$.
	
	We now show that the above is also true for any $\widetilde{\bf Z} \in \quotient{{\bf Z}}{\sim}$. Assume the inductive hypothesis holds for all $\widetilde{\bf Y} \in \{ \lhd \widetilde{\bf Z} \}$.
	Since $\lhd$ is valid, we obtain $q_{\widetilde{\bf Z}}$ by applying
	{\small
		\begin{align}
		&\phi_{\unlhd_{\widetilde{\bf Z}}}(q; {\cal G}) \equiv \nonumber \\ 
		& \hspace{0.5cm} \phi_{\widetilde{\bf Z}}\Big(
		\frac{
			p({\bf O},{\bf X},{\bf R} \setminus {\bf R}_{\{ \lhd \widetilde{\bf Z} \}}, {\bf R}_{\{ \lhd \widetilde{\bf Z} \}} = {\bf 1})
		}{
			\prod_{\widetilde{\bf Y} \in \{\lhd \widetilde{\bf Z}\}} q_{\widetilde{\bf Y}}
		}
		;
		\phi_{\lhd_{\widetilde{\bf Z}}}({\cal G})\Big),
		\label{eqn:ipw}
		\end{align}
	}%
	where
	$q_{\widetilde{\bf Y}}$ are defined by the inductive hypothesis, and
	$\phi_{\widetilde{\bf Z}}$ is defined via
	{\small
		\begin{align}
		\frac{
			q({\bf V} \setminus (({\bf X}^{({\bf 1})} \setminus {\bf X}^{({\bf 1})}_{\{ \lhd \widetilde{\bf Z} \}}) \cup {\bf R}_{\bf Z}), {\bf R}_{\bf Z} = {\bf 1} | {\bf W})
		}{
			\prod\limits_{{\bf Z} \in {\cal Z}}
			\prod\limits_{Z \in {\bf Z}} q(Z | \mb_{\tilde{\G}}(Z; \an_{\tilde{\G}}({\bf D}_{\bf Z}) \cap \prec_{\tilde{\G}}(Z)), {\bf R}_{\bf Z}) \vert_{({\bf R} \cap {\bf Z}) \cup {\bf R}_{\bf Z} = {\bf 1}}
		},
		\label{eqn:fix-set-miss}
		\end{align}
	}%
	where 
	{\small
		\[
		q({\bf V} \setminus ({\bf X}^{({\bf 1})} \setminus {\bf X}^{({\bf 1})}_{\{ \lhd \widetilde{\bf Z} \}}) | {\bf W})
		\equiv \frac{
			p({\bf O},{\bf X},{\bf R} \setminus {\bf R}_{\{ \lhd \widetilde{\bf Z} \}}, {\bf R}_{\{ \lhd \widetilde{\bf Z} \}} = {\bf 1})
		}{
			\prod_{\widetilde{\bf Y} \in \{\lhd \widetilde{\bf Z}\}} q_{\widetilde{\bf Y}}
		}.
		\]
	}%
	
	Consider the equivalent functional in the model where we observe ${\bf X^{(1)}_{\{ \lhd \widetilde{Z} \}}}$
	{\small
		\begin{align}
		\frac{
			{q^\dagger}({\bf V} \setminus (({\bf X}^{({\bf 1})} \setminus {\bf X}^{({\bf 1})}_{\{ \lhd \widetilde{\bf Z} \}}) \cup {\bf R}_{\bf Z}), {\bf R}_{\bf Z} = {\bf 1} | {\bf W})
		}{
			\prod\limits_{{\bf Z} \in {\cal Z}}
			\prod\limits_{Z \in {\bf Z}} {q^\dagger}(Z | \mb_{\tilde{\cal G}}(Z; \an_{\tilde{\cal G}}({\bf D}_{\bf Z}) \cap \prec_{\tilde{\G}}(Z)), {\bf R}_{\bf Z}) \vert_{({\bf R} \cap {\bf Z}) \cup {\bf R}_{\bf Z} = {\bf 1}}
		},
		\label{eqn:fix-set-obs}
		\end{align}
	}%
	where 
	\begin{align*}
	{q^\dagger}({\bf V} \setminus & ({\bf X}^{({\bf 1})} \setminus {\bf X}^{({\bf 1})}_{\{ \lhd \widetilde{\bf Z} \}}) | {\bf W})
	\equiv \\
	& \hspace{0.7cm}	 \frac{
		p({\bf O},{\bf X},{\bf X}^{({\bf 1})}_{\{ \lhd \widetilde{\bf Z} \}}, {\bf R} \setminus \widetilde{\bf R}_{\{ \lhd \widetilde{\bf Z} \}}, \widetilde{\bf R}_{\{ \lhd \widetilde{\bf Z} \}} = {\bf 1})
	}{
		\prod_{\widetilde{\bf Y} \in \{\lhd \widetilde{\bf Z}\}} q_{\widetilde{\bf Y}}
	}, 
	\end{align*}%
	and $\widetilde{\bf R}_{\{ \lhd \widetilde{\bf Z} \}}$ is defined as the subset of ${\bf R}_{\{ \lhd \widetilde{\bf Z} \}}$ that is fixed in $\{ \lhd \widetilde{\bf Z} \}$.
	
	The only difference between (\ref{eqn:fix-set-miss}) and (\ref{eqn:fix-set-obs}) for the purposes of the denominator is the variables in
	${\bf R}_{\{ \lhd \widetilde{\bf Z} \}} \setminus \widetilde{\bf R}_{\{ \lhd \widetilde{\bf Z} \}}$.  But the denominator is independent of these variables, by assumption. Thus, it follows that fixing on a valid partial order with missing data and fixing on a total order consistent with this partial order, as in causal inference, yield equivalent kernels. 
	
	The conclusion follows by Lemma 55 in \cite{richardson17nested}.
	
	
	
\end{proof}

\begin{lema}{\ref{lem:property}}
	Consider a DAG ${\cal G}({\bf X}^{({\bf 1})}, {\bf R}, {\bf O}, {\bf X})$ such that for every $R_i \in {\bf R}$,
	$\{ R_j | X_j^{(1)} \in \pa_{\cal G}(R_i) \} \cap \an_{\cal G}(R_i) = \emptyset$.  Then for every $R_i \in {\bf R}$, a fixing schedule $\lhd$
	for $\{ \{ R_j \} | R_j \in {\cal G}_{{\bf R} \cap \de_{\cal G}(R_i)} \}$ given by the partial order induced by the ancestrality relation on
	${\cal G}_{{\bf R} \cap \de_{\cal G}(R_i)}$ is valid in ${\cal G}({\bf X}^{({\bf 1})}, {\bf R}, {\bf O}, {\bf X})$, by taking each $\bf X^{(1)}_{\widetilde{{\bf Z}}} =  \bigcup_{{\bf Z} \in \{ \unlhd \widetilde{\bf Z} \}} {\bf X}^{({\bf 1})}_{\bf Z}$, 
	for every $\widetilde{\bf Z} \in \{ \unlhd \{ R_i \} \}$. Thus the target law is identified.
\end{lema}

\begin{proof}
	
	In order to prove that the target law is identified, we demonstrate that conditions (i-v) in Proposition \ref{prop:r} are satisfied for each $R_i$.
	
	Conditions (i) and (ii) are trivially satisfied as we choose to fix ${\bf Z} \subseteq {\bf R}$, and we choose no equivalence relation, thus $\quotient{\bf Z}{\sim}$ consists of singleton sets of $R$s.
	Condition (iii) is also trivial as each ${\bf X^{(1)}_{\widetilde{Z}}}$ is a union of the corresponding sets ${\bf X^{(1)}_{\widetilde{Y}}}$, for ${\bf \widetilde{Y}}$ earlier in the partial order. In the proposed order we never fix elements in ${\bf X^{(1)}}$, and propose to keep elements in ${\bf X^{(1)}} \cap \pa_{\G}(R_j)$ for every $R_j \in {\bf Z}$. In particular, this also includes $R_i$, satisfying condition (iv).
	
	Finally, we show that the proposed schedule $\lhd$ is valid by showing that each ${\bf \widetilde{Z}} \in \quotient{{\bf Z}}{\sim}$ is fixable. There are 3 conditions for an element $\widetilde{\bf Z}$ to be fixable as mentioned in section \ref{sec:alg}. We go through each of these conditions and demonstrate each $\widetilde{\bf Z}$ in $\quotient{\bf Z}{\sim}$ is a valid fixing in $\phi_{\lhd_{\widetilde{\bf Z}}}({\cal G})$ where $\lhd$ is the proposed fixing schedule above.
	
	In the proposed schedule each $\widetilde{{\bf Z}}$ is a singleton $R_j \in \quotient{\bf Z}{\sim}$ that we are trying to fix in a graph $\phi_{\lhd_{R_j}}({\cal G})$. Since ${\bf X}^{({\bf 1})}_{R_j} = {\bf X}^{({\bf 1})}$, $\phi_{\lhd_{R_j}}({\cal G})$ is a CDAG. Thus, ${\cal D}(\phi_{\lhd_{R_j}}({\cal G}))$ is just sets of singleton vertices. In particular, ${\bf D}_{R_j} = \{ R_j \}$. Further, by definition of the schedule, it must be that $\de_{\phi_{\lhd_{R_j}}({\cal G})}(R_j)=\{R_j\}$. This satisfies condition (i).
	
	For condition (ii), we note that ${\bf S} \subseteq \nd_{\phi_{\lhd_{R_j}}({\cal G})}(R_j)$ else, ${\bf S}$ contains some $R_k \in \de_{\G}(R_j)$ which should have been fixed prior to $R_j$ by the proposed partial order. Thus, it follows that ${\bf S} \cap \{ R_j \}=\emptyset$.
	
	
	Finally, following the partial order, and under the assumption stated in the lemma, ${\bf R}_{\{R_j\}} \subseteq \{\lhd R_j \}$. 
	We have also proved that ${\bf S} \subseteq \nd_{\phi_{\lhd_{R_j}}({\cal G})}(R_j)$. Therefore, $R_j \ci ({\bf S} \cup {\bf R}_{\{R_j\}}) \setminus \mb_{\phi_{\lhd_{R_j}}({\cal G})}(R_j) | \mb_{\phi_{\lhd_{R_j}}({\cal G})}(R_j)$.
	
	Since each $\widetilde{{\bf Z}}$ is fixable, the proposed partial order $\lhd$ for each $R_i$ is valid. Therefore, all five conditions in Proposition \ref{prop:r} are satisfied concluding the target law is ID. 
	
\end{proof}

\subsection*{B. An example to illustrate the algorithm}

We walk the reader through identification of the target law for the missing data DAG shown in Fig.~\ref{fig:example}(a) in order to demonstrate the full generality of our missing ID algorithm. 
As a reminder, the target law is identified by (\ref{eqn:chain}) if we are able to identify $p(R_i|\pa_{\cal G}(R_i)) \vert_{{\bf R = 1}}$ for each $R_i \in {\bf R}$. The identification of these conditional densities are shown in equations (i) through (viii). For a clearer presentation of this example, we switch to one-column format.

\clearpage
\onecolumn

\begin{figure}
	\begin{center}
		\scalebox{0.56}{
			\begin{tikzpicture}[>=stealth, node distance=2cm]
			\tikzstyle{every node}=[font=\Large]
			\tikzstyle{format} = [thick, circle, minimum size=1.0mm, inner sep=3pt]
			\tikzstyle{square} = [draw, thick, minimum size=1.0mm, inner sep=3pt]
			\begin{scope}
			\path[->, very thick]
			node[format] (x11) {$X^{(1)}_1$}
			node[format, right of=x11] (x21) {$X^{(1)}_2$}
			node[format, right of=x21] (x31) {$X^{(1)}_3$}
			node[format, right of=x31] (x41) {$X^{(1)}_4$}
			node[format, below of=x11] (r1) {$R_1$}
			node[format, below of=x21] (r2) {$R_2$}
			node[format, below of=x31] (r3) {$R_3$}
			node[format, below of=x41] (r4) {$R_4$}
			node[format, left of=r1] (r5) {$R_5$}
			node[format, left of=r5] (r6) {$R_6$}
			node[format, left of=r6] (r7) {$R_7$}
			node[format, left of=r7] (r8) {$R_8$}
			node[format, above of=r5] (x51) {$X^{(1)}_5$}
			node[format, above of=r6] (x61) {$X^{(1)}_6$}
			node[format, above of=r7] (x71) {$X^{(1)}_7$}
			node[format, above of=r8] (x81) {$X^{(1)}_8$}
			(x11) edge[blue, bend left] (x31)
			(x11) edge[blue] (r2)
			(x31) edge[blue] (x21)
			(x21) edge[blue] (r1)
			(x11) edge[blue] (r4)
			(x41) edge[blue] (r1)
			(x41) edge[blue] (r3)
			(r2) edge[blue] (r1)
			(r2) edge[blue] (r3)
			(r3) edge[blue, bend left=15] (r1)
			(r4) edge[blue, bend left=15] (r2)
			(r1) edge[blue] (r5)
			(r1) edge[blue, bend left=15] (r6)
			(r8) edge[blue] (r7)
			(r8) edge[blue, bend right=15] (r6)
			(r4) edge[blue, bend left=25] (r8)
			(x51) edge[blue] (r1)
			(x51) edge[blue] (r6)
			(x61) edge[blue] (r1)
			(x61) edge[blue] (r5)
			(x61) edge[blue] (r7)
			(x61) edge[blue] (r8)
			(x71) edge[blue] (r8)
			(x71) edge[blue] (r6)
			(x81) edge[blue] (r1)
			(x81) edge[blue] (x71)
			(x71) edge[blue] (x61)
			(x61) edge[blue] (x51)
			(x71) edge[blue, bend left] (x51)
			node[below of=r1, yshift=-1cm] (node) {\LARGE (a) $\cal G$}
			;
			\end{scope}
			\begin{scope}[xshift=9cm, yshift=-2cm]
			\path[->, very thick]
			node[format] (r1) {$R_1$}
			node[format, below of=r1, xshift=-0.5cm] (r5) {$R_5$}
			node[format, below of=r1, xshift=0.5cm] (r6) {$R_6$}
			(r1) edge[blue, -] (r5)
			(r1) edge[blue, -] (r6)
			node[below of=r5, xshift=0.5cm, yshift=1cm] (node) {\LARGE (b)}
			;
			\end{scope}
			\begin{scope}[xshift=12cm, yshift=0cm]
			\path[->, very thick]
			node[format] (r2) {$R_2$}
			node[format, below of=r2, xshift=-0.5cm] (r1) {$R_1$}
			node[format, below of=r2, xshift=0.5cm] (r0) {}
			node[format, below of=r1, xshift=-0.5cm] (r5) {$R_5$}
			node[format, below of=r1, xshift=0.5cm] (r6) {$R_6$}
			node[format, below of=r0, xshift=0.5cm] (r3) {$R_3$}
			(r2) edge[blue, -] (r1)
			(r1) edge[blue, -] (r5)
			(r1) edge[blue, -] (r6)
			(r2) edge[blue, -] (r3)
			node[below of=r6, xshift=0cm, yshift=1cm] (node) {\LARGE(c)}
			;
			\end{scope}
			\begin{scope}[xshift=15cm, yshift=-2cm]
			\path[->, very thick]
			node[format] (r8) {$R_8$}
			node[format, below of=r8, xshift=-0.5cm] (r6) {$R_6$}
			node[format, below of=r8, xshift=0.5cm] (r7) {$R_7$}
			(r8) edge[blue, -] (r6)
			(r8) edge[blue, -] (r7)
			node[below of=r6, xshift=0.5cm, yshift=1cm] (node) {\LARGE (d)}
			;
			\end{scope}
			\begin{scope}[xshift=19cm, yshift=2cm]
			\path[->, very thick]
			node[format] (r4) {$R_4$}
			node[format, below of=r4, xshift=-0.5cm] (r2) {$R_2$}
			node[format, below of=r2, xshift=-0.5cm] (r13) {$\{R_1, R_3\}$}
			node[format, below of=r13, xshift=-0.5cm] (r5) {$R_5$}
			node[format, below of=r4, xshift=0.5cm] (r0) {}
			node[format, below of=r0, xshift=0.5cm] (r8) {$R_8$}
			node[format, below of=r8, xshift=0.5cm] (r7) {$R_7$}
			node[format, below of=r8, xshift=-0.75cm] (r6) {$R_6$}
			(r4) edge[blue, -] (r2)
			(r2) edge[blue, -] (r13)
			(r13) edge[blue, -] (r5)
			(r4) edge[blue, -] (r8)
			(r8) edge[blue, -] (r6)
			(r8) edge[blue, -] (r7)
			(r13) edge[blue, -] (r6)
			node[below of=r6, xshift=0cm, yshift=1cm] (node) {\LARGE (e)}
			;
			\end{scope}
			\end{tikzpicture}
		}
	\end{center}
	\caption{(a) A complex missing data DAG used to illustrate the general techniques used in our algorithm (b-e) The corresponding fixing schedules of $R$s.} 
	\label{fig:example}
\end{figure}
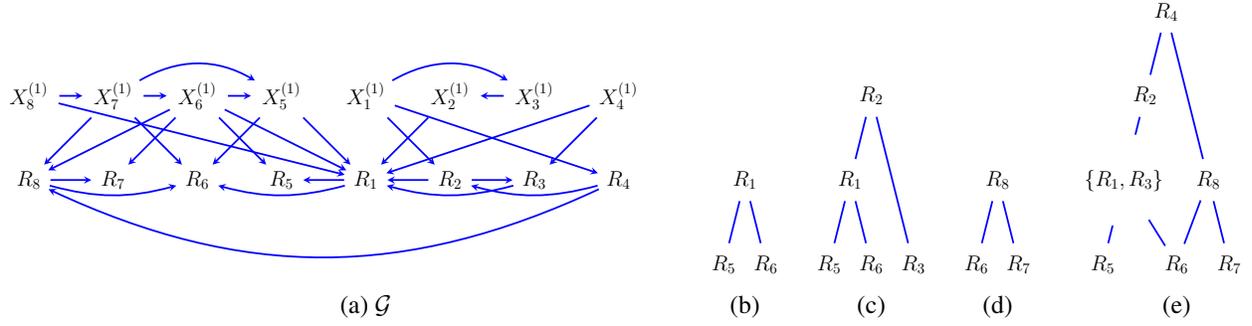

We start with $\{R_3, R_5, R_6, R_7\}$. The fixing schedules for these are empty and we obtain the following immediately from the original distribution.
\begin{enumerate}[label=(\roman*)]
	\item $p(R_3 | \pa(R_3)) = p(R_3 | R_2, X^{(1)}_4) = p(R_3 | R_2, X_4, \bias{R_4})$,  
	\item $p(R_5 | \pa(R_5)) = p(R_5 | R_1, X^{(1)}_6) = p(R_5 | R_1, X_6, \bias{R_6})$, 
	\item $p(R_6 | \pa(R_6)) = p(R_6 | R_1, R_8, X^{(1)}_5, X^{(1)}_7) = p(R_6 | R_1, R_8, X_5, X_7, \bias{R_5, R_7})$,  
	\item $p(R_7 | \pa(R_7)) = p(R_7 | R_8, X^{(1)}_6) = p(R_7 | R_8, X_6, \bias{R_6})$. 
\end{enumerate}

\begin{center}
	\textcolor{brown}{\rule{0.75\textwidth}{1pt}}
\end{center}

For $R_1$, we choose ${\bf Z}=\{R_1, R_5, R_6\}$, and no equivalence relations. Thus, $\quotient{{\bf Z}}{\sim}=\{ \{R_1\}, \{R_5\},\{R_6\}  \}$. The fixing schedule $\lhd$ is a partial order shown in Fig.~\ref{fig:example}(b) where $R_5$ and $R_6$ are incompatible, and $R_5 \prec R_1$, $R_6 \prec R_1$. 
Starting with the original $\cal G$ in Fig.~\ref{fig:example}(a), fixing $R_5$ and $R_6$ in parallel yields the following kernel. 
\begin{align}
q_{r_1}( {\bf X}\setminus \{X_5, X_6\}, X^{(1)}_5, X^{(1)}_6, {\bf R} \setminus \{R_5, R_6\} | \bias{R_5, R_6}) =  \frac{p({\bf X, R=1})}{p(R_5 | R_1, X^{(1)}_6) \ p(R_6 | R_1, R_8, X^{(1)}_5, X^{(1)}_7)\vert_{{\bf R = 1}}}, 
\label{eq:R1}
\end{align}
where the propensity scores in the denominator are identified using (ii) and (iii). The CADMG corresponding to this fixing operation is shown in Fig.~\ref{fig:R1R8}(a). 
\begin{align*}
\text{(v)} \quad p(R_1 | \pa(R_1))\vert_{{\bf R = 1}} 
&= p(R_1 | R_2, R_3, X^{(1)}_2, X^{(1)}_4, X^{(1)}_5, X^{(1)}_6)\vert_{{\bf R = 1}}  \\ 
&= q_{r_1} (R_1 | R_2, R_3, X^{(1)}_2, X^{(1)}_4, X_5, X_6,  \bias{R_5, R_6})\vert_{{\bf R = 1}} \\
&= q_{r_1} (R_1 | R_3, X_2, X^{(1)}_4, X_5, X_6,  \bias{R_2, R_5, R_6})\vert_{{\bf R = 1}}  \\
&= q_{r_1} (R_1 | R_3, X_2, X_4, X_5, X_6,  \bias{R_2, R_4, R_5, R_6})\vert_{{\bf R = 1}}  \quad  \text{(by d-sep)}\\
\end{align*}%
where the last term can be obtained using kernel operations (conditioning+marginalization) on $q_{r_1}(.|.)$ defined in \eqref{eq:R1}.

\begin{center}
	\textcolor{brown}{\rule{0.75\textwidth}{1pt}}
\end{center}

A similar procedure is applicable to $R_8$, where $\quotient{{\bf Z}}{\sim}=\{ \{R_8\}, \{R_7\},\{R_6\}  \}$; Fig.~\ref{fig:example}(d). Starting with the original $\cal G$ in Fig.~\ref{fig:example}(a), fixing $R_6$ and $R_7$ in parallel yields the following kernel. 
\begin{align}
q_{r_8}( {\bf X}\setminus \{X_6, X_7\}, X^{(1)}_6, X^{(1)}_7, {\bf R} \setminus \{R_6, R_7\} | \bias{R_6, R_7}) =  \frac{p({\bf X, R=1})}{p(R_6 | R_1, R_8, X^{(1)}_5, X^{(1)}_7) \ p(R_7 | R_8, X^{(1)}_6)\vert_{{\bf R = 1}} }, 
\label{eq:R8}
\end{align}%
where the propensity scores in the denominator are identified using (iii) and (iv). The CADMG corresponding to this fixing operation is shown in Fig.~\ref{fig:R1R8}(b). 
\begin{align*}
\text{(vi)} \quad p(R_8 | \pa(R_8))\vert_{{\bf R = 1}} 
&= p(R_8 | R_4, X^{(1)}_6, X^{(1)}_7)\vert_{{\bf R = 1}}  \\
&= q_{r_8}(R_8 | R_4, X^{(1)}_6, X^{(1)}_7, \bias{R_6, R_7})\vert_{{\bf R = 1}}  \\
&= q_{r_8}(R_8 | R_4, X_6, X_7, \bias{R_6, R_7})\vert_{{\bf R = 1}}  
\end{align*}%
where the last term can be obtained using kernel operations (conditioning+marginalization) on $q_{r_8}(.|.)$ defined in \eqref{eq:R8}. 

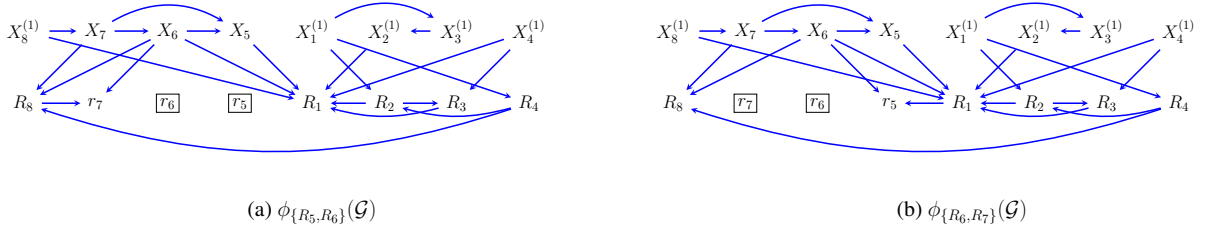
\begin{figure}
	\begin{center}
		\scalebox{0.48}{
			\begin{tikzpicture}[>=stealth, node distance=2cm]
			\tikzstyle{every node}=[font=\Large]
			\tikzstyle{format} = [thick, circle, minimum size=1.0mm, inner sep=3pt]
			\tikzstyle{square} = [draw, thick, minimum size=1.0mm, inner sep=3pt]
			\begin{scope}
			\path[->, very thick]
			node[format] (x11) {$X^{(1)}_1$}
			node[format, right of=x11] (x21) {$X^{(1)}_2$}
			node[format, right of=x21] (x31) {$X^{(1)}_3$}
			node[format, right of=x31] (x41) {$X^{(1)}_4$}
			node[format, below of=x11] (r1) {$R_1$}
			node[format, below of=x21] (r2) {$R_2$}
			node[format, below of=x31] (r3) {$R_3$}
			node[format, below of=x41] (r4) {$R_4$}
			node[square, left of=r1] (r5) {$r_5$}
			node[square, left of=r5] (r6) {$r_6$}
			node[format, left of=r6] (r7) {$r_7$}
			node[format, left of=r7] (r8) {$R_8$}
			node[format, above of=r5] (x51) {$X_5$}
			node[format, above of=r6] (x61) {$X_6$}
			node[format, above of=r7] (x71) {$X_7$}
			node[format, above of=r8] (x81) {$X^{(1)}_8$}
			(x11) edge[blue, bend left] (x31)
			(x11) edge[blue] (r2)
			(x31) edge[blue] (x21)
			(x21) edge[blue] (r1)
			(x11) edge[blue] (r4)
			(x41) edge[blue] (r1)
			(x41) edge[blue] (r3)
			(r2) edge[blue] (r1)
			(r2) edge[blue] (r3)
			(r3) edge[blue, bend left=15] (r1)
			(r4) edge[blue, bend left=15] (r2)
			(r8) edge[blue] (r7)
			(r4) edge[blue, bend left=18] (r8)
			(x51) edge[blue] (r1)
			(x61) edge[blue] (r1)
			(x61) edge[blue] (r7)
			(x61) edge[blue] (r8)
			(x71) edge[blue] (r8)
			(x81) edge[blue] (r1)
			(x81) edge[blue] (x71)
			(x71) edge[blue] (x61)
			(x61) edge[blue] (x51)
			(x71) edge[blue, bend left] (x51)
			node[below of=r1, yshift=-1cm] (node) {\LARGE (a) $\phi_{\{R_5, R_6\}}(\cal G)$}
			;
			\end{scope}
			\begin{scope}[xshift=18cm]
			\path[->, very thick]
			node[format] (x11) {$X^{(1)}_1$}
			node[format, right of=x11] (x21) {$X^{(1)}_2$}
			node[format, right of=x21] (x31) {$X^{(1)}_3$}
			node[format, right of=x31] (x41) {$X^{(1)}_4$}
			node[format, below of=x11] (r1) {$R_1$}
			node[format, below of=x21] (r2) {$R_2$}
			node[format, below of=x31] (r3) {$R_3$}
			node[format, below of=x41] (r4) {$R_4$}
			node[format, left of=r1] (r5) {$r_5$}
			node[square, left of=r5] (r6) {$r_6$}
			node[square, left of=r6] (r7) {$r_7$}
			node[format, left of=r7] (r8) {$R_8$}
			node[format, above of=r5] (x51) {$X_5$}
			node[format, above of=r6] (x61) {$X_6$}
			node[format, above of=r7] (x71) {$X_7$}
			node[format, above of=r8] (x81) {$X^{(1)}_8$}
			(x11) edge[blue, bend left] (x31)
			(x11) edge[blue] (r2)
			(x31) edge[blue] (x21)
			(x21) edge[blue] (r1)
			(x11) edge[blue] (r4)
			(x41) edge[blue] (r1)
			(x41) edge[blue] (r3)
			(r2) edge[blue] (r1)
			(r2) edge[blue] (r3)
			(r3) edge[blue, bend left=15] (r1)
			(r4) edge[blue, bend left=15] (r2)
			(r1) edge[blue] (r5)
			(r4) edge[blue, bend left=18] (r8)
			(x51) edge[blue] (r1)
			(x61) edge[blue] (r1)
			(x61) edge[blue] (r5)
			(x61) edge[blue] (r8)
			(x71) edge[blue] (r8)
			(x81) edge[blue] (r1)
			(x81) edge[blue] (x71)
			(x71) edge[blue] (x61)
			(x61) edge[blue] (x51)
			(x71) edge[blue, bend left] (x51)
			node[below of=r1, yshift=-1cm] (node) {\LARGE (b) $\phi_{\{R_6, R_7\}}(\cal G)$}
			;
			\end{scope}
			\end{tikzpicture}
		}
	\end{center}
	\caption{(a) Graph corresponding to the kernel obtained in \eqref{eq:R1} (b) Graph corresponding to the kernel obtained in \eqref{eq:R8}.} 
	\label{fig:R1R8}
\end{figure}

\begin{center}
	\textcolor{brown}{\rule{0.75\textwidth}{1pt}}
\end{center}

For $R_2$, we choose ${\bf Z}=\{ R_1, R_2, R_3, R_5, R_6\}$, and no equivalence relations. Thus, $\quotient{{\bf Z}}{\sim}=\{ \{R_1\}, \{R_2\},\{R_3\}, \{R_5\}, \{R_6\} \}$. The fixing schedule $\lhd$ is a partial order where $R_3, R_5, R_6$ are incompatible and $R_5, R_6 \prec R_1 \prec R_2$ and $R_3 \prec R_2$ as shown in Fig.~\ref{fig:example}(c). 
In addition, the portion of the fixing schedule involving $R_1$, $R_5$, and $R_6$ is executed in a latent projection ADMG where we treat $X^{(1)}_2$ as being hidden as shown in Fig.~\ref{fig:R2}(a), while the portion of the fixing schedule involving $R_3$ is executed in the original graph, Fig.~\ref{fig:example}(a).
\begin{align}
\text{(vii)} \quad p(R_2 | R_4, X^{(1)}_1)  = q_{r_2}( R_2 | R_4, X^{(1)}_1, \bias{R_1, R_3}), 
\end{align}
where $q_{r_2}$ corresponds to the kernel obtained by following the partial order of fixing $R_3$ and $R_1$, separately. That is,
\begin{align}
q_{r_2}( . |  \bias{R_1, R_3}) = 
\frac{p(\bf X, R=1)}
{q^{1}_{r_2}( R_1 | R_2, R_3, X_2, X_5, X_6, X^{(1)}_3, X^{(1)}_8, \bias{R_5, R_6}) \ p(R_3 | R_2, X^{(1)}_4)}.
\end{align}%
The propensity score for $R_3$ is obtained from (i) and $q^1_{r_2}$ is the kernel obtained by fixing $R_5$ and $R_6$ in parallel in a graph where $X^{(1)}_2$ is treated as hidden, as shown in Figures \ref{fig:R2}(a) and (b). That is, 
\begin{align*}
q^{1}_{r_2}({\bf X}\setminus \{X_5, X_6\}, X^{(1)}_5, X^{(1)}_6, {\bf R} \setminus \{R_5, R_6\} | \bias{R_5, R_6}) = \frac{p({\bf X, R=1})}{p(R_5 | R_1, X^{(1)}_6) \	p(R_6 | R_1, R_8, X^{(1)}_5, X^{(1)}_7)\vert_{{\bf R = 1}}}. 
\end{align*}%
The propensity scores in the denominator above are identified using (ii) and (iii). For clarity, the CADMGs corresponding to fixing $R_1$ and $R_3$ are illustrated in Figures \ref{fig:R2}(c) and (d). 

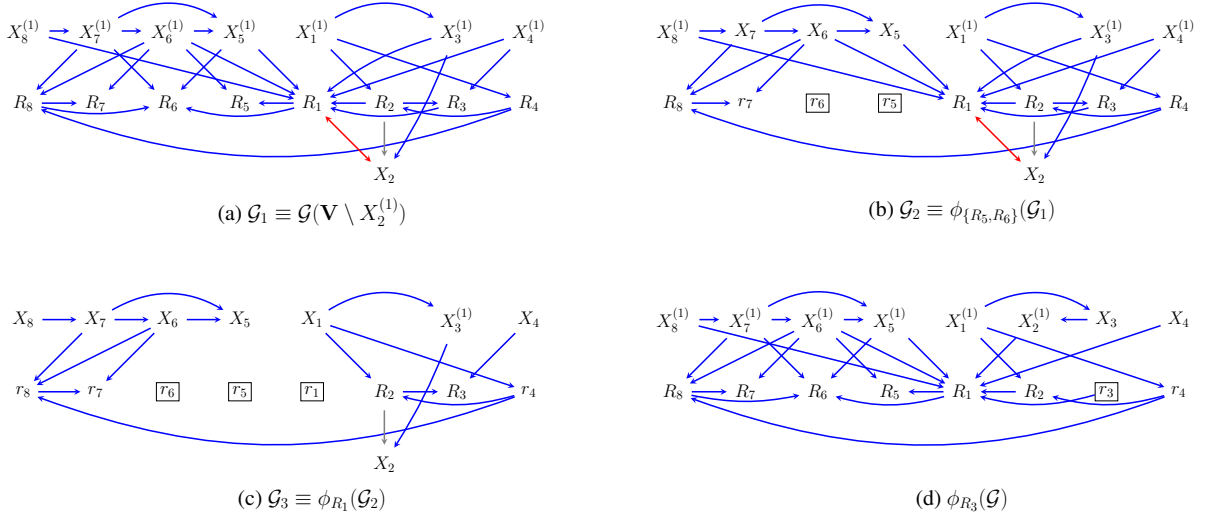
\begin{figure}
	\begin{center}
		\scalebox{0.48}{
			\begin{tikzpicture}[>=stealth, node distance=2cm]
			\tikzstyle{every node}=[font=\Large]
			\tikzstyle{format} = [thick, circle, minimum size=1.0mm, inner sep=3pt]
			\tikzstyle{square} = [draw, thick, minimum size=1.0mm, inner sep=3pt]
			\begin{scope}
			\path[->, very thick]
			node[format] (x11) {$X^{(1)}_1$}
			node[format, right of=x11] (x21) {}
			node[format, right of=x21] (x31) {$X^{(1)}_3$}
			node[format, right of=x31] (x41) {$X^{(1)}_4$}
			node[format, below of=x11] (r1) {$R_1$}
			node[format, below of=x21] (r2) {$R_2$}
			node[format, below of=r2] (x2) {$X_2$}
			node[format, below of=x31] (r3) {$R_3$}
			node[format, below of=x41] (r4) {$R_4$}
			node[format, left of=r1] (r5) {$R_5$}
			node[format, left of=r5] (r6) {$R_6$}
			node[format, left of=r6] (r7) {$R_7$}
			node[format, left of=r7] (r8) {$R_8$}
			node[format, above of=r5] (x51) {$X^{(1)}_5$}
			node[format, above of=r6] (x61) {$X^{(1)}_6$}
			node[format, above of=r7] (x71) {$X^{(1)}_7$}
			node[format, above of=r8] (x81) {$X^{(1)}_8$}
			(x11) edge[blue, bend left] (x31)
			(x11) edge[blue] (r2)
			(x31) edge[blue, bend right=10] (r1)
			(x11) edge[blue] (r4)
			(x41) edge[blue] (r1)
			(x41) edge[blue] (r3)
			(r2) edge[blue] (r1)
			(r2) edge[blue] (r3)
			(r3) edge[blue, bend left=15] (r1)
			(r4) edge[blue, bend left=15] (r2)
			(r1) edge[blue] (r5)
			(r1) edge[blue, bend left=15] (r6)
			(r8) edge[blue] (r7)
			(r8) edge[blue, bend right=12] (r6)
			(r4) edge[blue, bend left=20] (r8)
			(x51) edge[blue] (r1)
			(x51) edge[blue] (r6)
			(x61) edge[blue] (r1)
			(x61) edge[blue] (r5)
			(x61) edge[blue] (r7)
			(x61) edge[blue] (r8)
			(x71) edge[blue] (r8)
			(x71) edge[blue] (r6)
			(x81) edge[blue] (r1)
			(x81) edge[blue] (x71)
			(x71) edge[blue] (x61)
			(x61) edge[blue] (x51)
			(r1) edge[red, <->] (x2)
			(x31) edge[blue, bend left=5] (x2)
			(r2) edge[gray] (x2)
			(x71) edge[blue, bend left] (x51)
			node[below of=r1, yshift=-1cm] (node) {\LARGE (a) ${\cal G}_1 \equiv {\cal G}({\bf V}\setminus X^{(1)}_2)$}
			;
			\end{scope}
			\begin{scope}[xshift=18cm]
			\path[->, very thick]
			node[format] (x11) {$X^{(1)}_1$}
			node[format, right of=x11] (x21) {}
			node[format, right of=x21] (x31) {$X^{(1)}_3$}
			node[format, right of=x31] (x41) {$X^{(1)}_4$}
			node[format, below of=x11] (r1) {$R_1$}
			node[format, below of=x21] (r2) {$R_2$}
			node[format, below of=r2] (x2) {$X_2$}
			node[format, below of=x31] (r3) {$R_3$}
			node[format, below of=x41] (r4) {$R_4$}
			node[square, left of=r1] (r5) {$r_5$}
			node[square, left of=r5] (r6) {$r_6$}
			node[format, left of=r6] (r7) {$r_7$}
			node[format, left of=r7] (r8) {$R_8$}
			node[format, above of=r5] (x51) {$X_5$}
			node[format, above of=r6] (x61) {$X_6$}
			node[format, above of=r7] (x71) {$X_7$}
			node[format, above of=r8] (x81) {$X^{(1)}_8$}
			(x11) edge[blue, bend left] (x31)
			(x11) edge[blue] (r2)
			(x31) edge[blue, bend right=10] (r1)
			(x11) edge[blue] (r4)
			(x41) edge[blue] (r1)
			(x41) edge[blue] (r3)
			(r2) edge[blue] (r1)
			(r2) edge[blue] (r3)
			(r3) edge[blue, bend left=15] (r1)
			(r4) edge[blue, bend left=15] (r2)
			(r8) edge[blue] (r7)
			(r4) edge[blue, bend left=20] (r8)
			(x51) edge[blue] (r1)
			(x61) edge[blue] (r1)
			(x61) edge[blue] (r7)
			(x61) edge[blue] (r8)
			(x71) edge[blue] (r8)
			(x81) edge[blue] (r1)
			(x81) edge[blue] (x71)
			(x71) edge[blue] (x61)
			(x61) edge[blue] (x51)
			(r1) edge[red, <->] (x2)
			(x31) edge[blue, bend left=5] (x2)
			(r2) edge[gray] (x2)
			(x71) edge[blue, bend left] (x51)
			node[below of=r1, yshift=-1cm] (node) {\LARGE (b) ${\cal G}_2 \equiv \phi_{\{R_5, R_6\}} ({\cal G}_1)$}
			;
			\end{scope}
			\begin{scope}[yshift=-8cm]
			\path[->, very thick]
			node[format] (x11) {$X_1$}
			node[format, right of=x11] (x21) {}
			node[format, right of=x21] (x31) {$X^{(1)}_3$}
			node[format, right of=x31] (x41) {$X_4$}
			node[square, below of=x11] (r1) {$r_1$}
			node[format, below of=x21] (r2) {$R_2$}
			node[format, below of=r2] (x2) {$X_2$}
			node[format, below of=x31] (r3) {$R_3$}
			node[format, below of=x41] (r4) {$r_4$}
			node[square, left of=r1] (r5) {$r_5$}
			node[square, left of=r5] (r6) {$r_6$}
			node[format, left of=r6] (r7) {$r_7$}
			node[format, left of=r7] (r8) {$r_8$}
			node[format, above of=r5] (x51) {$X_5$}
			node[format, above of=r6] (x61) {$X_6$}
			node[format, above of=r7] (x71) {$X_7$}
			node[format, above of=r8] (x81) {$X_8$}
			(x11) edge[blue, bend left] (x31)
			(x11) edge[blue] (r2)
			(x11) edge[blue] (r4)
			(x41) edge[blue] (r3)
			(r2) edge[blue] (r3)
			(r4) edge[blue, bend left=15] (r2)
			(r8) edge[blue] (r7)
			(r4) edge[blue, bend left=20] (r8)
			(x61) edge[blue] (r7)
			(x61) edge[blue] (r8)
			(x71) edge[blue] (r8)
			(x81) edge[blue] (x71)
			(x71) edge[blue] (x61)
			(x61) edge[blue] (x51)
			(x31) edge[blue, bend left=5] (x2)
			(r2) edge[gray] (x2)
			(x71) edge[blue, bend left] (x51)
			node[below of=r1, yshift=-1cm] (node) {\LARGE (c) ${\cal G}_3 \equiv \phi_{R_1} ({\cal G}_2)$}
			;
			\end{scope}
			\begin{scope}[xshift=18cm, yshift=-8cm]
			\path[->, very thick]
			node[format] (x11) {$X^{(1)}_1$}
			node[format, right of=x11] (x21) {$X^{(1)}_2$}
			node[format, right of=x21] (x31) {$X_3$}
			node[format, right of=x31] (x41) {$X_4$}
			node[format, below of=x11] (r1) {$R_1$}
			node[format, below of=x21] (r2) {$R_2$}
			node[square, below of=x31] (r3) {$r_3$}
			node[format, below of=x41] (r4) {$r_4$}
			node[format, left of=r1] (r5) {$R_5$}
			node[format, left of=r5] (r6) {$R_6$}
			node[format, left of=r6] (r7) {$R_7$}
			node[format, left of=r7] (r8) {$R_8$}
			node[format, above of=r5] (x51) {$X^{(1)}_5$}
			node[format, above of=r6] (x61) {$X^{(1)}_6$}
			node[format, above of=r7] (x71) {$X^{(1)}_7$}
			node[format, above of=r8] (x81) {$X^{(1)}_8$}
			(x11) edge[blue, bend left] (x31)
			(x11) edge[blue] (r2)
			(x31) edge[blue] (x21)
			(x21) edge[blue] (r1)
			(x11) edge[blue] (r4)
			(x41) edge[blue] (r1)
			(r2) edge[blue] (r1)
			(r3) edge[blue, bend left=15] (r1)
			(r4) edge[blue, bend left=15] (r2)
			(r1) edge[blue] (r5)
			(r1) edge[blue, bend left=15] (r6)
			(r8) edge[blue] (r7)
			(r8) edge[blue, bend right=11] (r6)
			(r4) edge[blue, bend left=20] (r8)
			(x51) edge[blue] (r1)
			(x51) edge[blue] (r6)
			(x61) edge[blue] (r1)
			(x61) edge[blue] (r5)
			(x61) edge[blue] (r7)
			(x61) edge[blue] (r8)
			(x71) edge[blue] (r8)
			(x71) edge[blue] (r6)
			(x81) edge[blue] (r1)
			(x81) edge[blue] (x71)
			(x71) edge[blue] (x61)
			(x61) edge[blue] (x51)
			(x71) edge[blue, bend left] (x51)
			node[below of=r1, yshift=-1cm] (node) {\LARGE (d) $\phi_{R_3}(\cal G)$}
			;
			\end{scope}
			\end{tikzpicture}
		}
	\end{center}
	\caption{Execution of the fixing schedule to obtain the propensity score for $R_1$ (a) Latent projection ADMG obtained by projecting out $X^{(1)}_2$ (b) Fixing $R_5$ and $R_6$ in ${\cal G}_1$ (c) Fixing $R_1$ in ${\cal G}_2$ (d) Fixing $R_3$ in the original graph.} 
	\label{fig:R2}
\end{figure}

\begin{center}
	\textcolor{brown}{\rule{0.75\textwidth}{1pt}}
\end{center}

Finally, for $R_4$, we choose ${\bf Z}=\{ {\bf R } \}$ and equivalence relation $R_1 \sim R_3$. Thus, $\quotient{{\bf Z}}{\sim}=\{ \{R_1, R_3\}, \{R_2\}, \{R_4\}, \{R_5\}, \{R_6\}, \{R_7\}, \{R_8\} \}$. The fixing schedule $\lhd$ is a partial order where $R_5, R_6 \prec \{ R_1, R_3\} \prec R_2 \prec R_4$ and $R_6, R_7 \prec R_8 \prec R_4$ as shown in Fig.~\ref{fig:example}(e). In addition, the portion of the fixing schedule involving $R_5$, $R_6$, $\{R_1, R_3\}$, and $R_2$ is executed in a latent projection ADMG where we treat $X^{(1)}_2$ and $X^{(1)}_4$ as hidden variables, shown in Fig.~\ref{fig:R4}(b), while the portion of the fixing schedule involving $R_6, R_7$, and $R_8$ is executed in the original graph, Fig.~\ref{fig:example}(a).
\begin{align}
\text{(viii)} \quad p(R_4 | X^{(1)}_1)  = q_{r_4}( R_4 | X^{(1)}_1, \bias{R_2, R_8}), 
\end{align}
where $q_{r_4}$ corresponds to the kernel obtained by following the partial order of fixing $R_2$ and $R_8$, separately. That is,
\begin{align}
q_{r_4}( . |  \bias{R_2, R_8}) = 
\frac{p(\bf X, R=1)}
{q^{1}_{r_4}( R_2 | R_4, X_2) \ q^{2}_{r_4} (R_8 | R_4, X_6, X_7)}.
\end{align}%
$q^1_{r_4}$ is the kernel obtained by fixing the set $\{R1, R3\}$ in graph $\G_2$ shown in Fig.~\ref{fig:R4}(c). That is,

\begin{align*}
q^{1}_{r_4}(. | \bias{R_1, R_3, R_5, R_6}) 
&= \frac{q^{3}_{r_4}(. | \bias{R_5, R_6})}{q^{3}_{r_4}(R_ 1, R_3 | R_2, R_4, X_2, X^{(1)}_3, X_4)} \\
&= \frac{q^{3}_{r_4}(. | \bias{R_5, R_6})}
{q^{3}_{r_4}(R_ 1 | R_2, R_4, X_2, X_3, X_4, \bias{R_3}) \  q^{3}_{r_4}(R_ 3 | R_2, R_4, X_2, X_4)}
\end{align*} %
$q^3_{r_4}$ is the kernel obtained by fixing $R_5$ and $R_6$ in parallel in the graph $\G_1$ shown in Fig.~\ref{fig:R4}(b). That is,

\begin{align*}
q^{3}_{r_4}(. | \bias{R_5, R_6}) = \frac{p({\bf X, R=1})}{p(R_5 | R_1, X^{(1)}_6) \	p(R_6 | R_1, R_8, X^{(1)}_5, X^{(1)}_7)\vert_{{\bf R = 1}}}.
\end{align*} %
The propensity scores in the denominator above are identified using (ii) and (iii).

Finally, $q^{2}_{r_4}$ is the kernel obtained by fixing $R_6$ and $R_7$ in parallel in the original graph $\G$, shown in Fig.~\ref{fig:example}(a).  That is,

\begin{align*}
q^{2}_{r_4}(. | \bias{R_6, R_7}) = \frac{p({\bf X, R=1})}{p(R_6 | R_1, R_8, X^{(1)}_5, X^{(1)}_7  )  \ p(R_7 | R_8, X^{(1)}_6)\vert_{{\bf R = 1}}}.
\end{align*}%
The propensity scores in the denominator above are identified using (iii) and (iv). For clarity, the CADMG corresponding to fixing $R_8$ is illustrated in Figures \ref{fig:R4}(a). 

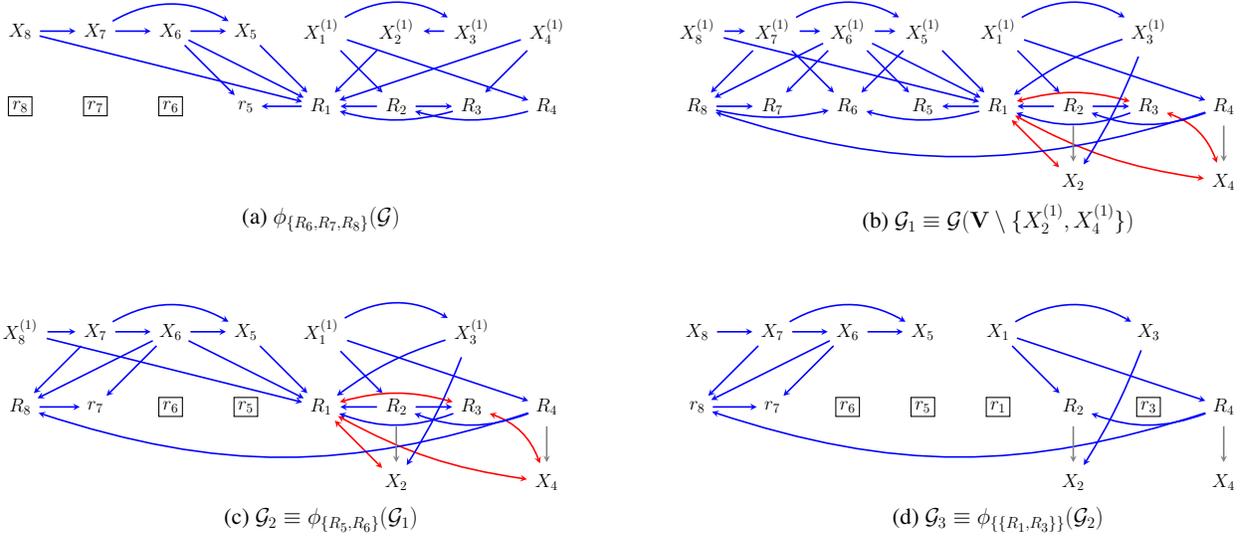
\begin{figure}
	\begin{center}
		\scalebox{0.5}{
			\begin{tikzpicture}[>=stealth, node distance=2cm]
			\tikzstyle{every node}=[font=\Large]
			\tikzstyle{format} = [thick, circle, minimum size=1.0mm, inner sep=3pt]
			\tikzstyle{square} = [draw, thick, minimum size=1.0mm, inner sep=3pt]
			\begin{scope}
			\path[->, very thick]
			node[format] (x11) {$X^{(1)}_1$}
			node[format, right of=x11] (x21) {$X^{(1)}_2$}
			node[format, right of=x21] (x31) {$X^{(1)}_3$}
			node[format, right of=x31] (x41) {$X^{(1)}_4$}
			node[format, below of=x11] (r1) {$R_1$}
			node[format, below of=x21] (r2) {$R_2$}
			node[format, below of=x31] (r3) {$R_3$}
			node[format, below of=x41] (r4) {$R_4$}
			node[format, left of=r1] (r5) {$r_5$}
			node[square, left of=r5] (r6) {$r_6$}
			node[square, left of=r6] (r7) {$r_7$}
			node[square, left of=r7] (r8) {$r_8$}
			node[format, above of=r5] (x51) {$X_5$}
			node[format, above of=r6] (x61) {$X_6$}
			node[format, above of=r7] (x71) {$X_7$}
			node[format, above of=r8] (x81) {$X_8$}
			(x11) edge[blue, bend left] (x31)
			(x11) edge[blue] (r2)
			(x31) edge[blue] (x21)
			(x21) edge[blue] (r1)
			(x11) edge[blue] (r4)
			(x41) edge[blue] (r1)
			(x41) edge[blue] (r3)
			(r2) edge[blue] (r1)
			(r2) edge[blue] (r3)
			(r3) edge[blue, bend left=15] (r1)
			(r4) edge[blue, bend left=15] (r2)
			(r1) edge[blue] (r5)
			(x51) edge[blue] (r1)
			(x61) edge[blue] (r1)
			(x61) edge[blue] (r5)
			(x81) edge[blue] (r1)
			(x81) edge[blue] (x71)
			(x71) edge[blue] (x61)
			(x61) edge[blue] (x51)
			(x71) edge[blue, bend left] (x51)
			node[below of=r1, yshift=-1cm] (node) {\LARGE (a) $\phi_{\{R_6, R_7, R_8\}} (\cal G)$}
			;
			\end{scope}
			\begin{scope}[xshift=18cm]
			\path[->, very thick]
			node[format] (x11) {$X^{(1)}_1$}
			node[format, right of=x11] (x21) {}
			node[format, right of=x21] (x31) {$X^{(1)}_3$}
			node[format, right of=x31] (x41) {}
			node[format, below of=x11] (r1) {$R_1$}
			node[format, below of=x21] (r2) {$R_2$}
			node[format, below of=r2] (x2) {$X_2$}
			node[format, below of=x31] (r3) {$R_3$}
			node[format, below of=x41] (r4) {$R_4$}
			node[format, below of=r4] (x4) {$X_4$}
			node[format, left of=r1] (r5) {$R_5$}
			node[format, left of=r5] (r6) {$R_6$}
			node[format, left of=r6] (r7) {$R_7$}
			node[format, left of=r7] (r8) {$R_8$}
			node[format, above of=r5] (x51) {$X^{(1)}_5$}
			node[format, above of=r6] (x61) {$X^{(1)}_6$}
			node[format, above of=r7] (x71) {$X^{(1)}_7$}
			node[format, above of=r8] (x81) {$X^{(1)}_8$}
			(x11) edge[blue, bend left] (x31)
			(x11) edge[blue] (r2)
			(x31) edge[blue, bend right=9] (r1)
			(x11) edge[blue] (r4)
			(r2) edge[blue] (r1)
			(r1) edge[red, <->] (x2)
			(r1) edge[red, <->, bend left=15] (r3)
			(r3) edge[red, <->, bend left=25] (x4)
			(r1) edge[red, <->, bend right=10] (x4)
			(x31) edge[blue, bend left=5] (x2)
			(r2) edge[blue] (r3)
			(r3) edge[blue, bend left=20] (r1)
			(r4) edge[blue, bend left=20] (r2)
			(r1) edge[blue] (r5)
			(r1) edge[blue, bend left=15] (r6)
			(r8) edge[blue] (r7)
			(r8) edge[blue, bend right=12] (r6)
			(r4) edge[blue, bend left=18] (r8)
			(x51) edge[blue] (r1)
			(x51) edge[blue] (r6)
			(x61) edge[blue] (r1)
			(x61) edge[blue] (r5)
			(x61) edge[blue] (r7)
			(x61) edge[blue] (r8)
			(x71) edge[blue] (r8)
			(x71) edge[blue] (r6)
			(x81) edge[blue] (r1)
			(x81) edge[blue] (x71)
			(x71) edge[blue] (x61)
			(x61) edge[blue] (x51)
			(r2) edge[gray] (x2)
			(r4) edge[gray] (x4)
			(x71) edge[blue, bend left] (x51)
			node[below of=r1, yshift=-1cm] (node) {\LARGE (b) ${\cal G}_1 \equiv {\cal G}({\bf V}\setminus \{X^{(1)}_2, X^{(1)}_4\})$ }
			;
			\end{scope}
			\begin{scope}[yshift=-8cm]
			\path[->, very thick]
			node[format] (x11) {$X^{(1)}_1$}
			node[format, right of=x11] (x21) {}
			node[format, right of=x21] (x31) {$X^{(1)}_3$}
			node[format, right of=x31] (x41) {}
			node[format, below of=x11] (r1) {$R_1$}
			node[format, below of=x21] (r2) {$R_2$}
			node[format, below of=r2] (x2) {$X_2$}
			node[format, below of=x31] (r3) {$R_3$}
			node[format, below of=x41] (r4) {$R_4$}
			node[format, below of=r4] (x4) {$X_4$}
			node[square, left of=r1] (r5) {$r_5$}
			node[square, left of=r5] (r6) {$r_6$}
			node[format, left of=r6] (r7) {$r_7$}
			node[format, left of=r7] (r8) {$R_8$}
			node[format, above of=r5] (x51) {$X_5$}
			node[format, above of=r6] (x61) {$X_6$}
			node[format, above of=r7] (x71) {$X_7$}
			node[format, above of=r8] (x81) {$X^{(1)}_8$}
			(x11) edge[blue, bend left] (x31)
			(x11) edge[blue] (r2)
			(x31) edge[blue, bend right=9] (r1)
			(x11) edge[blue] (r4)
			(r2) edge[blue] (r1)
			(r1) edge[red, <->] (x2)
			(r1) edge[red, <->, bend left=15] (r3)
			(r3) edge[red, <->, bend left=25] (x4)
			(r1) edge[red, <->, bend right=10] (x4)
			(x31) edge[blue, bend left=5] (x2)
			(r2) edge[blue] (r3)
			(r3) edge[blue, bend left=20] (r1)
			(r4) edge[blue, bend left=20] (r2)
			(r8) edge[blue] (r7)
			(r4) edge[blue, bend left=18] (r8)
			(x51) edge[blue] (r1)
			(x61) edge[blue] (r1)
			(x61) edge[blue] (r7)
			(x61) edge[blue] (r8)
			(x71) edge[blue] (r8)
			(x81) edge[blue] (r1)
			(x81) edge[blue] (x71)
			(x71) edge[blue] (x61)
			(x61) edge[blue] (x51)
			(r2) edge[gray] (x2)
			(r4) edge[gray] (x4)
			(x71) edge[blue, bend left] (x51)
			node[below of=r1, yshift=-1cm] (node) {\LARGE (c) $\G_2 \equiv \phi_{\{R_5, R_6\}}(\G_1)$}
			;
			\end{scope}
			\begin{scope}[yshift=-8cm, xshift=18cm]
			\path[->, very thick]
			node[format] (x11) {$X_1$}
			node[format, right of=x11] (x21) {}
			node[format, right of=x21] (x31) {$X_3$}
			node[format, right of=x31] (x41) {}
			node[square, below of=x11] (r1) {$r_1$}
			node[format, below of=x21] (r2) {$R_2$}
			node[format, below of=r2] (x2) {$X_2$}
			node[square, below of=x31] (r3) {$r_3$}
			node[format, below of=x41] (r4) {$R_4$}
			node[format, below of=r4] (x4) {$X_4$}
			node[square, left of=r1] (r5) {$r_5$}
			node[square, left of=r5] (r6) {$r_6$}
			node[format, left of=r6] (r7) {$r_7$}
			node[format, left of=r7] (r8) {$r_8$}
			node[format, above of=r5] (x51) {$X_5$}
			node[format, above of=r6] (x61) {$X_6$}
			node[format, above of=r7] (x71) {$X_7$}
			node[format, above of=r8] (x81) {$X_8$}
			(x11) edge[blue, bend left] (x31)
			(x11) edge[blue] (r2)
			(x11) edge[blue] (r4)
			(x31) edge[blue, bend left=5] (x2)
			(r4) edge[blue, bend left=20] (r2)
			(r8) edge[blue] (r7)
			(r4) edge[blue, bend left=18] (r8)
			(x61) edge[blue] (r7)
			(x61) edge[blue] (r8)
			(x71) edge[blue] (r8)
			(x81) edge[blue] (x71)
			(x71) edge[blue] (x61)
			(x61) edge[blue] (x51)
			(r2) edge[gray] (x2)
			(r4) edge[gray] (x4)
			(x71) edge[blue, bend left] (x51)
			node[below of=r1, yshift=-1cm] (node) {\LARGE (d) $\G_3 \equiv \phi_{\{\{R_1, R_3\}\}}(\G_2)$}
			;
			\end{scope}
			\end{tikzpicture}
		}
	\end{center}
	\caption{Execution of the fixing schedule to obtain the propensity score for $R_4$ (a) CADMG obtained by following the schedule to get the propensity score for $R_8$ (b) Latent projection ADMG obtained by projecting out $X^{(1)}_2$ and $X^{(1)}_4$ (c) Fixing $R_5$ and $R_6$ in ${\cal G}_1$ (d) Fixing $R_1$ in ${\cal G}_2$. } 
	\label{fig:R4}
\end{figure}



\subsection*{C. Table for Lemma \ref{lem:colluder}}
\begin{minipage}{0.2\textwidth}
	\begin{center}
		\scalebox{0.75}{
			\begin{tikzpicture}[>=stealth, node distance=1.3cm]
			\tikzstyle{format} = [thick, circle, minimum size=1.0mm,
			inner sep=0pt]
			\begin{scope}
			\path[->, very thick]
			node[format] (x11) {$X^{(1)}_1$}
			node[format, right of=x11] (x21) {$X^{(1)}_2$}
			node[format, below of=x11] (r1) {$R_1$}		
			node[format, below of=x21] (r2) {$R_2$}
			node[format, below of=r1] (x1) {$X_1$}
			node[format, below of=r2] (x2) {$X_2$}
			(x11) edge[blue] (r2)
			(r1) edge[blue] (r2)
			(r1) edge[gray] (x1)
			(r2) edge[gray] (x2)
			(x11) edge[gray, bend right] (x1)
			(x21) edge[gray, bend left] (x2)
			;
			\end{scope} 
			\end{tikzpicture}
		}
	\end{center}
\end{minipage}
\begin{minipage}{0.7\textwidth}
	\scalebox{0.8}{
		\begin{tabular}{| c | c |}
			\hline
			$R_1$ & $p(R_1)$ \\ \hline
			$0$  & $a$     \\ 
			$1$  & $1-a$  \\ \hline
		\end{tabular}
		
		\hspace{0.5cm}
		
		\begin{tabular}{| c | c |}
			\hline
			$X^{(1)}_1$ & $p(X^{(1)}_1)$ \\ \hline
			$0$  & $b$     \\ 
			$1$  & $1-b$ \\ \hline
		\end{tabular}
		
		\hspace{0.5cm}
		
		\begin{tabular}{| c | c |}
			\hline
			$X^{(1)}_2$ & $p(X^{(1)}_2)$ \\ \hline
			$0$  & $c$     \\ 
			$1$  & $1 - c$  \\ \hline
		\end{tabular}
		
		\hspace{0.5cm}
		
		\begin{tabular}{| c : c c | c |}
			\hline
			$R_2$ & $R_1$ & $X^{(1)}_1$ & $p(R_2 | R_1, X^{(1)}_1)$   \\ \hline
			$0$  & $0$ & $0$  & $\red{d}$     \\
			$1$  & $0$ & $0$ & $\red{1 - d}$ \\ \hline 
			$0$  & $1$ & $0$ & $e$     \\
			$1$  & $1$ & $0$ & $1- e$  \\
			$0$  & $0$ & $1$  & $\red{f}$   \\
			$1$  & $0$ & $1$ & $\red{1 - f}$ \\ \hline 
			$0$  & $1$ & $1$ & $g$     \\
			$1$  & $1$ & $1$ & $1- g$  \\ \hline
		\end{tabular}
	}
\end{minipage}

\begin{table}[h]
	\begin{center}
		\scalebox{0.9}{
			\begin{tabular}{ | c | c | c | c | c | c | c | c | }
				\hline
				$R_1$  & $R_2$   & $X^{(1)}_1$   &  $X^{(1)}_2$   & p(Full Law)  & $X_1$   & $X_2$   &  p(Observed Law)    \\ \hline
				\multirow{4}{*}{0} & \multirow{4}{*}{0} & 0     & 0    &  $abc\red{d}$   & \multirow{4}{*}{?} & \multirow{4}{*}{?} & \multirow{4}{*}{$a\red{\Big[ db + f(1-b)) \Big]}$}   \\ 
				&   &  1 & 0  & $a(1-b)c\red{f}$   &   &  &  \\
				&   &  0 & 1  & $ab(1-c)\red{d}$   &   &  &  \\
				&   &  1 & 1  & $a(1-b)(1-c)\red{f}$   &   &  &  \\  
				
				\hline \hline 
				
				\multirow{4}{*}{1} & \multirow{4}{*}{0} & 0     & 0    &  $(1-a)ebc$   &  \multirow{2}{*}{$0$} & \multirow{4}{*}{?} & \multirow{2}{*}{$(1-a)eb$}   \\ 
				&   &  1 & 0  & $(1-a)g(1-b)c$   &  &  &  \\
				&   &  0 & 1  & $(1-a)eb(1-c)$   &  \multirow{2}{*}{$1$}  &  &  \multirow{2}{*}{$(1-a)g(1-b)$}  \\
				&   &  1 & 1  & $(1-a)g(1-b)(1-c)$   &  &  &  \\ 
				
				\hline \hline 
				
				\multirow{4}{*}{0} & \multirow{4}{*}{1} & 0     & 0    &  $abc\red{(1-d)}$   & \multirow{4}{*}{?} & \multirow{2}{*}{$0$}   & \multirow{2}{*}{$ac\Big[1 - \red{\Big( db + f(1-b)  \Big)} \Big]$}   \\ 
				&   &  1 & 0  & $a(1-b)c\red{(1-f)}$   &  &  &  \\
				&   &  0 & 1  & $ab(1-c)\red{(1-d)}$   &  & \multirow{2}{*}{$1$}  &  \multirow{2}{*}{$a(1-c)\Big[1 - \red{\Big( db + f(1-b)  \Big)}\Big]$}  \\
				&   &  1 & 1  & $a(1-b)(1-c)\red{(1-f)}$   &  &  &  \\ 
				
				\hline \hline 
				
				\multirow{4}{*}{1} & \multirow{4}{*}{1} & 0     & 0    &  $(1-a)(1-e)bc$   & $0$ & $0$   &  $(1-a)(1-e)bc$    \\     
				&  & $1$   & $0$    &  $(1-a)(1-g)(1-b)c$   & $1$   & $0$  &  $(1-a)(1-g)(1-b)c$   \\      
				&  & $0$   & $1$    &  $(1-a)(1-e)b(1-c)$   & $0$   & $0$  &  $(1-a)(1-e)b(1-c)$   \\ 
				&  & $1$   & $1$    &  $(1-a)(1-g)(1-b)(1-c)$   & $1$   & $1$  &  $(1-a)(1-g)(1-b)(1-c)$   \\  \hline
			\end{tabular}
		}
	\end{center}
\end{table}

\vspace{0.2cm}
Any pair of $\{d, f\}$ would lead to different full laws. However, as long as $ db + f(1-b)$ stays constant, the observe law would agree across all different full laws (which include infinitely many models). This is a general characterization of non-identifiable models with two binary random variables.

\end{document}